\newcommand{\norm}[1]{\left\Vert#1\right\Vert}
\newcommand{\abs}[1]{\left\vert#1\right\vert}
\newcommand{\set}[1]{\left\{#1\right\}}
\newcommand{\brac}[1]{\left [#1\right ]}
\newcommand{\Real}{\mathbb R}
\newcommand{\too}{\rightarrow}
\newcommand{\dist}{\textrm{d}} 
\newtheorem*{rep@theorem}{\rep@title}
\newcommand{\newreptheorem}[2]{%
	\newenvironment{rep#1}[1]{%
		\def\rep@title{#2 \ref{##1}}%
		\begin{rep@theorem}}%
		{\end{rep@theorem}}}
\newtheorem{theorem}{Theorem}
\newtheorem{lemma}{Lemma}
\newcommand{\subalign}[1]{%
	\vcenter{%
		\Let@ \restore@math@cr \default@tag
		\baselineskip\fontdimen10 \scriptfont\tw@
		\advance\baselineskip\fontdimen12 \scriptfont\tw@
		\lineskip\thr@@\fontdimen8 \scriptfont\thr@@
		\lineskiplimit\lineskip
		\ialign{\hfil$\m@th\scriptstyle##$&$\m@th\scriptstyle{}##$\crcr
			#1\crcr
		}%
	}
}
\def \etal{{et al}.}
\newcommand{\eg}{{e.g.}}
\newcommand{\ie}{{i.e.}}
\def\eqref#1{equation~\ref{#1}}
\def\Eqref#1{Equation~\ref{#1}}
\def\1{\bm{1}}
\DeclareMathAlphabet{\mathsfit}{\encodingdefault}{\sfdefault}{m}{sl}
\SetMathAlphabet{\mathsfit}{bold}{\encodingdefault}{\sfdefault}{bx}{n}
\def\gL{{\mathcal{L}}}
\def\gM{{\mathcal{M}}}
\def\gO{{\mathcal{O}}}
\def\gS{{\mathcal{S}}}
\def\gT{{\mathcal{T}}}
\def\gX{{\mathcal{X}}}
\DeclareMathOperator*{\argmax}{arg\,max}
\title{Controlling Neural Level Sets}
\author{%
	Matan Atzmon,
	Niv Haim,  
	Lior Yariv,
	Ofer Israelov,
	Haggai Maron,
	Yaron Lipman \\
	Weizmann Institute of Science\\
	Rehovot, Israel \\
}
\begin{document}
	
	\maketitle
	
	\begin{abstract}
		
		The level sets of neural networks represent fundamental properties such as decision boundaries of classifiers and are used to model non-linear manifold data such as curves and surfaces. Thus, methods for controlling the neural level sets could find many applications in machine learning.
		
		In this paper we present a simple and scalable approach to directly control level sets of a deep neural network. Our method consists of two parts: (i) sampling of the neural level sets, and (ii) relating the samples' positions to the network parameters. The latter is achieved by a \emph{sample network} that is constructed by adding a single fixed linear layer to the original network. In turn, the sample network can be used to incorporate the level set samples into a loss function of interest.
		
		We have tested our method on three different learning tasks: improving generalization to unseen data, training networks robust to adversarial attacks, and curve and surface reconstruction from point clouds. For surface reconstruction, we produce high fidelity surfaces directly from raw 3D point clouds. When training small to medium networks to be robust to adversarial attacks we obtain robust accuracy comparable to state-of-the-art methods. 
		
		
	\end{abstract}

	\section{Introduction}
	
	
	The level sets of a Deep Neural Network (DNN) are known to capture important characteristics and properties of the network. A popular example is when the network $F(x;\theta):\Real^d\times \Real^m \too \Real^l$ represents a classifier, $\theta$ are its learnable parameters, $f_i(x;\theta)$ are its logits (the outputs of the final linear layer), and the level set 
	\begin{equation}\label{e:decision_boundary}
	\gS(\theta)=\set{x\in\Real^d \ \Big\vert \ f_j-\max_{i\ne j} \set{f_i} = 0 }
	\end{equation}
	represents the decision boundary of the $j$-th class. Another recent example is modeling a manifold (\eg, a curve or a surface in $\Real^3$) using a level set of a neural network (e.g., \cite{park2019deepsdf}). That is, 
	\begin{equation}\label{e:manifold}
	\gS(\theta)=\set{x\in \Real^d \ \Big\vert \ F = 0 }
	\end{equation}
	represents (generically) a manifold of dimension $d-l$ in $\Real^d$.

	The goal of this work is to provide practical means to directly control and manipulate \emph{neural level sets} $\gS(\theta)$, as exemplified in Equations \ref{e:decision_boundary}, \ref{e:manifold}. The main challenge is how to incorporate $\gS(\theta)$ in a differentiable loss function. Our key observation is that given a sample $p\in \gS(\theta)$, its position can be associated to the network parameters: $p=p(\theta)$, in a \emph{differentiable} and \emph{scalable} way. In fact, $p(\theta)$ is itself a neural network that is obtained by an addition of a \emph{single} linear layer to $F(x;\theta)$; we call these networks \emph{sample networks}. Sample networks, together with an efficient mechanism for sampling the level set, $\{p_j(\theta)\}\subset\gS(\theta)$, can be  incorporated in general loss functions as a proxy for the level set $\gS(\theta)$.
	
	\begin{figure}
		\begin{tabular}{cccc}
			\includegraphics[width=0.22\columnwidth]{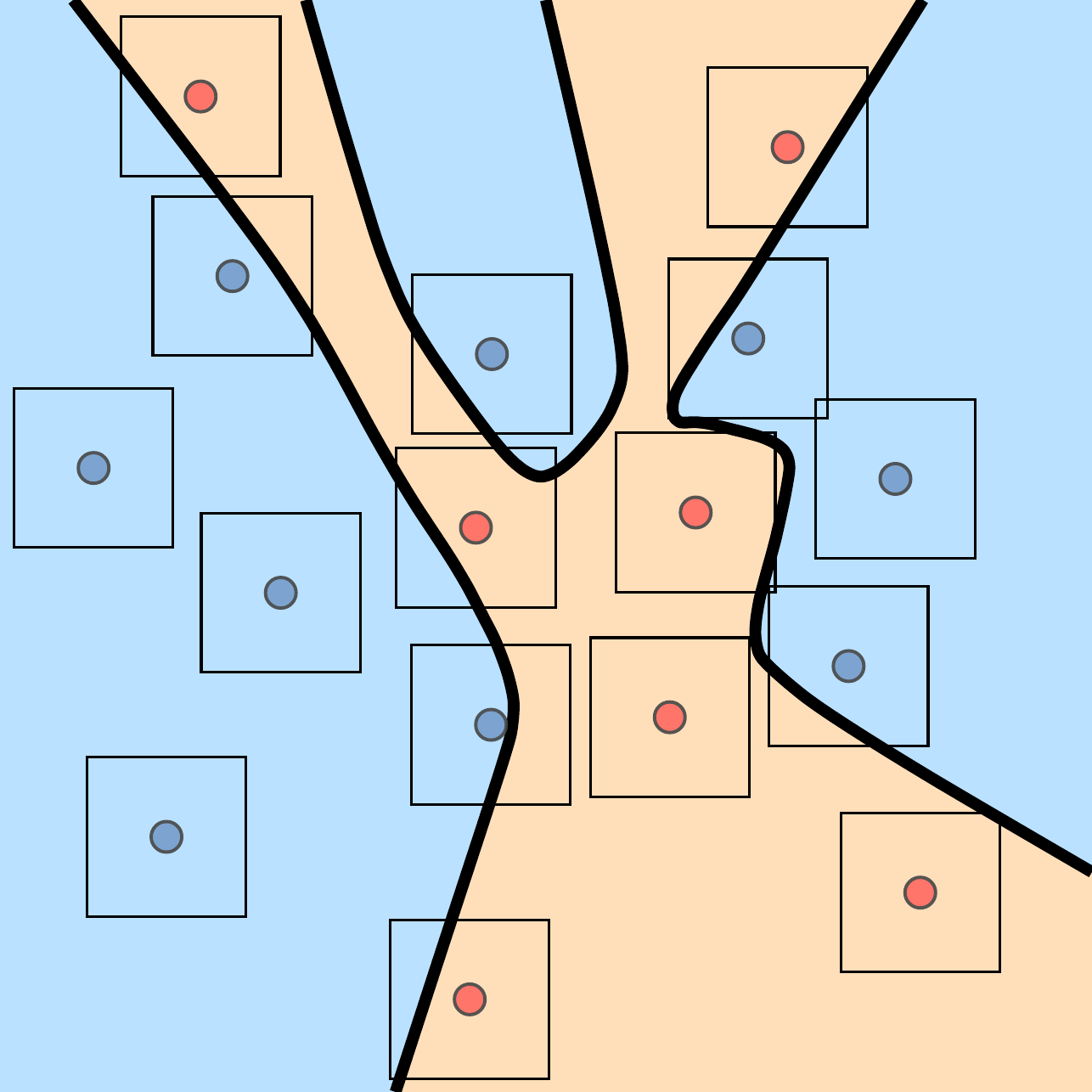} & 
			\includegraphics[width=0.22\columnwidth]{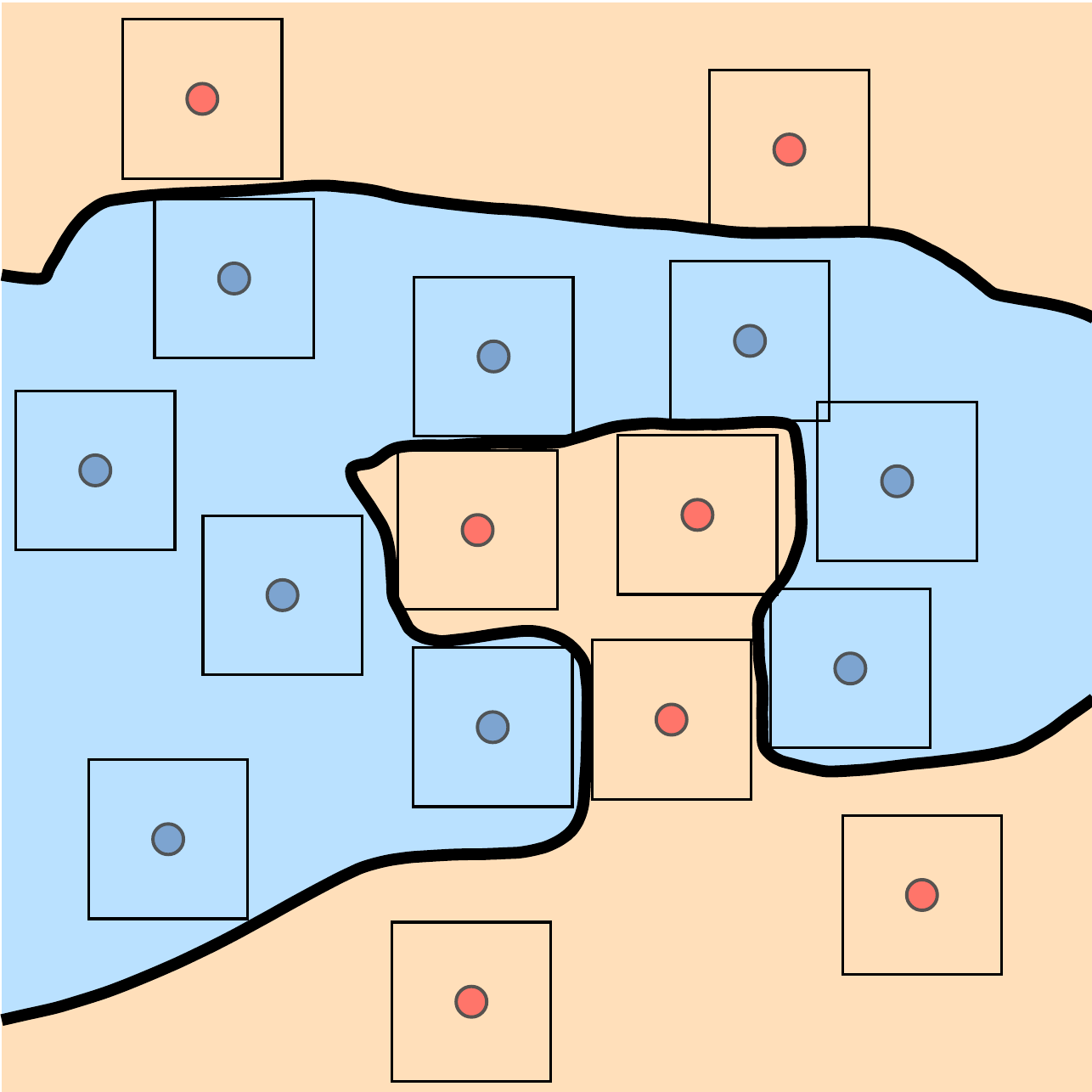} & 
			\includegraphics[width=0.22\columnwidth]{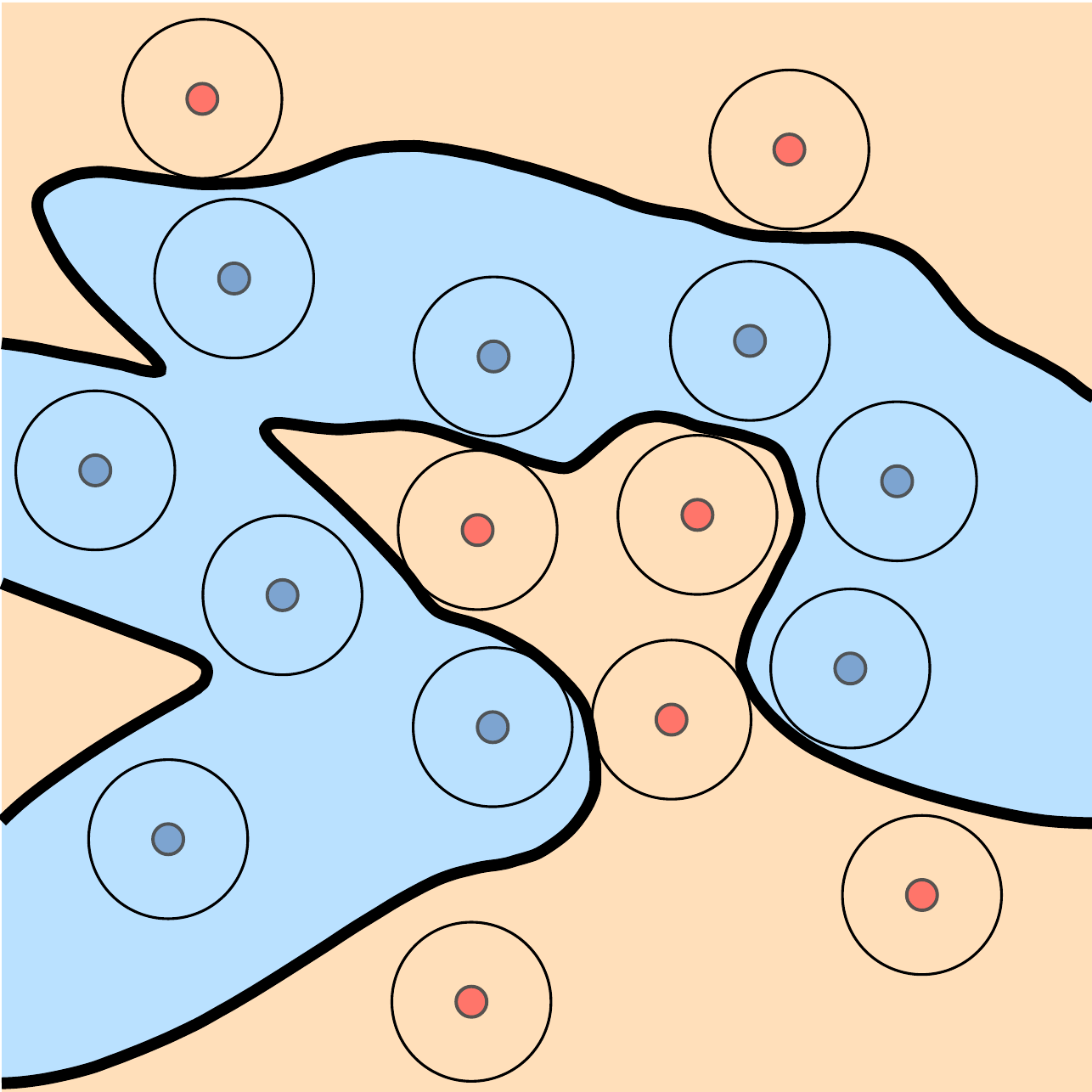} & 
			\includegraphics[width=0.22\columnwidth]{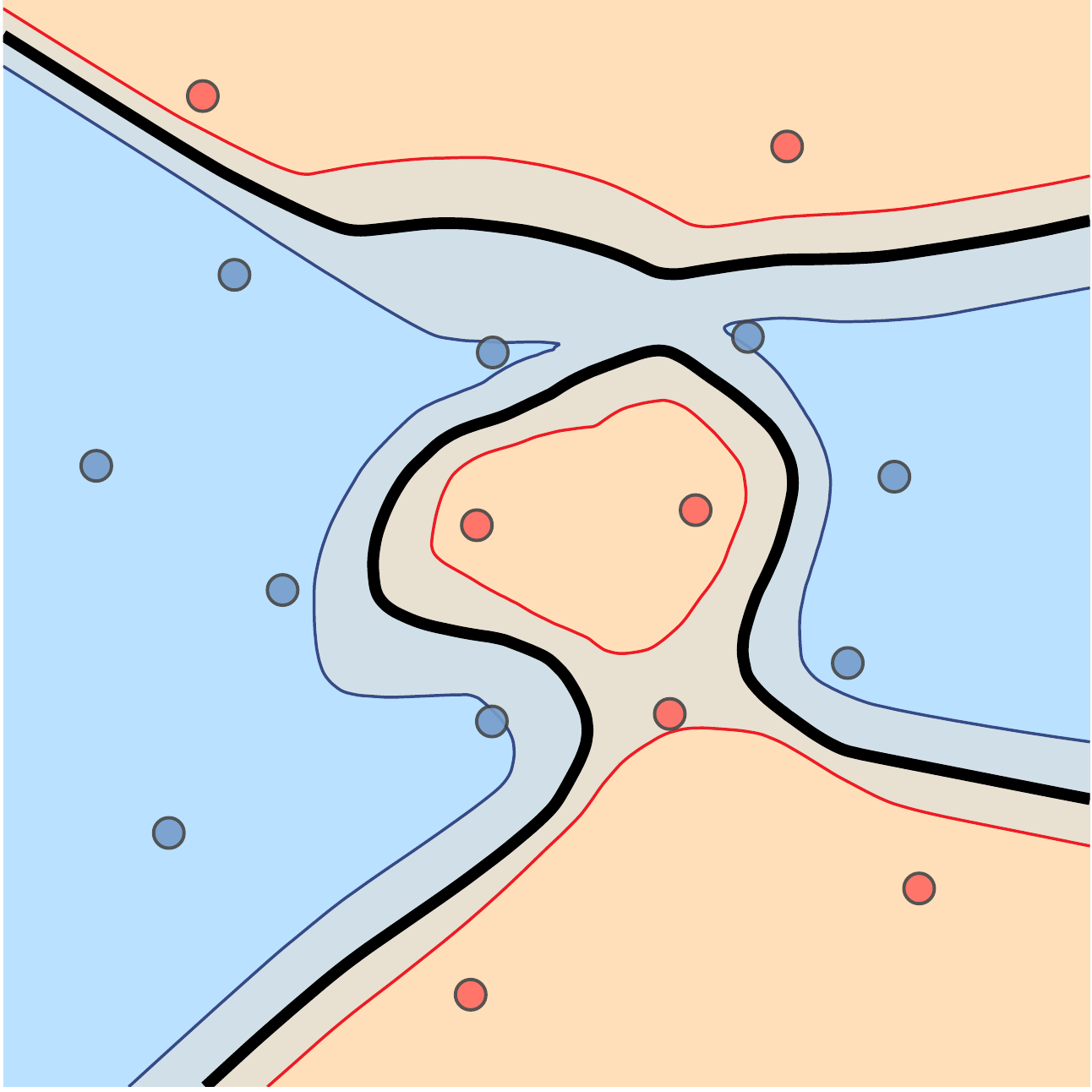} \\
			(a) & (b) & (c) & (d) 
		\end{tabular}
		\caption{Our method applied to binary classification in 2D. Blue and red dots represent positive and negative examples respectively. (a) standard cross-entropy loss baseline; (b) using our method to move the decision boundary at least $\varepsilon$ away from the training set in $L_\infty$ norm; (c) same for $L_2$ norm; (d) a geometrical adaptation of SVM soft-margin loss to neural networks using our method, the $+1,-1$ level sets are marked in light red and blue, respectively. Note that in (b),(c),(d) we achieve decision boundaries that seem to better explain the training examples compared to (a). }
		\label{fig:teaser}
		\vspace{-10pt}
	\end{figure}
	
	We apply our method of controlling the neural level sets to two seemingly different learning tasks: controlling decision boundaries of classifiers (\Eqref{e:decision_boundary}) and reconstructing curves and surfaces from point cloud data (\Eqref{e:manifold}). 
	
	At first, we use our method to improve the generalization and adversarial robustness in classification tasks. In these tasks, the distance between the training examples and the decision boundary is an important quantity called the \emph{margin}. Margins are traditionally desired to be as large as possible to improve generalization \cite{cortes1995support,elsayed2018large} and adversarial robustness \cite{elsayed2018large}. Usually, margins are only controlled indirectly by loss functions that measure network output values of training examples (\eg, cross entropy loss). Recently, researchers have been working on optimizations with more direct control of the margin using linearization techniques \cite{hein2017formal,matyasko2017margin,elsayed2018large}, regularization \cite{sokolic2017robust}, output-layer margin \cite{sun2015large}, or using margin gradients \cite{ding2018max}. We suggest controlling the margin by sampling the decision boundary, constructing the sample network, and measuring distances between samples and training examples directly. By applying this technique to train medium to small-size networks against adversarial perturbations we achieved comparable robust accuracy to state-of-the-art methods.   
	
	To improve generalization when learning from small amounts of data, we devise a novel geometrical formulation of the soft margin SVM loss to neural networks. This loss aims at directly increasing the \emph{input space} margin, in contrast to standard deep network hinge losses that deal with output space margin \cite{tang2013deep,sun2015large}. The authors of \cite{elsayed2018large} also suggest to increase input space margin to improve generalization.
	Figure \ref{fig:teaser} shows 2D examples of training our adversarial robustness and geometric SVM losses for networks.

	In a different application, we use our method for the reconstruction of manifolds such as curves and surfaces in $\Real^3$ from point cloud data. The usage of neural networks for the modeling of surfaces has recently become popular \cite{groueix2018papier, williams2018deep, chen2018learning, ben2018multi,park2019deepsdf,mescheder2018occupancy}. There are two main approaches: parametric and implicit. The parametric approach uses networks as parameterization functions to the manifolds \cite{groueix2018papier,williams2018deep}. The implicit approach represents the manifold as a level set of a neural network \cite{park2019deepsdf, chen2018learning, mescheder2018occupancy}. So far, implicit representations were learned using regression, given a signed distance function or occupancy function computed directly from a ground truth surface. Unfortunately, for raw point clouds in $\Real^3$, computing the signed distance function or an occupancy function is a notoriously difficult task \cite{berger2017survey}. In this paper we show that by using our sample network to control the neural level sets we can reconstruct curves and surfaces directly from point clouds in $\Real^3$.
	
	Lastly, to theoretically justify neural level sets for modeling manifolds or arbitrary decision boundaries, we prove a geometric version of the universality property of MLPs \cite{cybenko1989approximation,hornik1989multilayer}: any piecewise linear hyper-surface in $\Real^d$ (\ie, a $d-1$ manifold built out of a finite number of linear pieces, not necessarily bounded) can be precisely represented as a neural level set of a suitable MLP. 
	
	\newpage
	

	\section{Sample Network}\label{s:sample_network}\vspace{-5pt}
	Given a neural network $F(x ; \theta):\Real^d\times \Real^{m}\too \Real^l$ its $0\in \Real^l$ level set is defined by
	\begin{equation}
	\gS(\theta):=\set{ x \ \Big \vert \ F(x;\theta)=0}.
	\end{equation} 
	We denote by $D_x F(p;\theta) \in \Real^{l\times d}$ the matrix of partial derivatives of $F$ with respect to $x$. Assuming that $\theta$ is fixed, $F(p;\theta)=0$ and that $D_x F(p;\theta)$ is of full rank $l$ ($l \ll d$), a corollary of the Implicit Function Theorem \cite{krantz2012implicit} implies that $\gS(\theta)$ is a $d-l$ dimensional manifold in the vicinity of $p\in\gS(\theta)$.

	
	Our goal is to incorporate the neural level set $\gS(\theta)$ in a differentiable loss. We accomplish that by performing the following procedure at each training iteration: (i) Sample $n$ points on the level set: $p_i \in \gS(\theta)$, $i\in [n]$; (ii) Build the sample network $p_i(\theta)$, $i\in [n]$, by adding a fixed linear layer to the network $F(x;\theta)$; and (iii) Incorporate the sample network in a loss function as a proxy for $\gS(\theta)$.
	
	\vspace{-5pt}
	\subsection{Level set sampling} \label{ss:newton}\vspace{-5pt}
	To sample $\gS(\theta)$ at some $\theta=\theta_0$, we start with a set of $n$ points $p_i,\ i \in [n]$ sampled from some probability measure in $\Real^d$. Next, we perform generalized Newton iterations \citep{ben1966newton} to move each point $p$ towards $\gS(\theta_0)$:
	\begin{equation}\label{e:newton}
	p^{\mathrm{next}}=p - D_x F(p; \theta_0)^\dagger F(p;\theta_0), 
	\end{equation} 
	where $D_x F(p, \theta_0)^\dagger$ is the Moore-Penrose pseudo-inverse of $D_x F(p, \theta_0)$.
	The generalized Newton step solves the under-determined ($l \ll d$) linear system $F(p;\theta_0) + D_x F(p;\theta_0)(p^{\mathrm{next}}- p)=0$. To motivate this particular solution choice we show that the generalized Newton step applied to a linear function is an orthogonal projection onto its zero level set (see proof in 
	Appendix~\ref{app:proofs}):
	\begin{lemma}\label{lem:newton}
		Let $\ell(x)=Ax+b$, $A\in \Real^{l\times d}$, $b\in \Real^l$, $\ell < d$, and $A$ is of full rank $l$. Then \Eqref{e:newton} applied to $F(x)=\ell(x)$ is an orthogonal projection on the zero level set of $\ell$, namely, on $\set{x \ \vert \ \ell(x)=0}$.   
	\end{lemma}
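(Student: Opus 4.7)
The plan is to write out the generalized Newton step for the linear map $\ell$ explicitly, identify the displacement as lying in the row space of $A$, and check that the iterate lands on $\{\ell=0\}$. Since orthogonal projection onto an affine subspace is uniquely characterized by (i) mapping into the subspace and (ii) moving along the orthogonal complement of its direction, these two observations suffice.

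First I would note that since $A\in\Real^{l\times d}$ has full row rank $l$, the Moore--Penrose pseudo-inverse has the closed form $A^\dagger = A^T(AA^T)^{-1}$. With $F=\ell$ we have $D_x F(p)=A$ everywhere, so \eqref{e:newton} specializes to
\begin{equation*}
p^{\mathrm{next}} = p - A^T(AA^T)^{-1}(Ap + b).
\end{equation*}

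Next I would verify the two defining properties of the orthogonal projection of $p$ onto the affine subspace $\gZ:=\{x\mid Ax+b=0\}$. For (i), a direct substitution gives
\begin{equation*}
A p^{\mathrm{next}} + b = Ap - AA^T(AA^T)^{-1}(Ap+b) + b = Ap - (Ap+b) + b = 0,
\end{equation*}
so $p^{\mathrm{next}}\in\gZ$. For (ii), the displacement $p-p^{\mathrm{next}} = A^T(AA^T)^{-1}(Ap+b)$ lies in the column space of $A^T$, i.e.\ in the row space of $A$, which is the orthogonal complement of $\ker(A)$. Since $\gZ$ is an affine subspace whose direction is precisely $\ker(A)$, the displacement is orthogonal to $\gZ$.

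Finally I would invoke the standard characterization: for a nonempty affine subspace $\gZ\subset\Real^d$ with direction $V$, a point $q\in\gZ$ is the orthogonal projection of $p$ onto $\gZ$ if and only if $p-q\in V^\perp$. Applied to $q=p^{\mathrm{next}}$ and $V=\ker(A)$, this concludes the argument. I expect no real obstacle here; the only thing to be careful about is the pseudo-inverse formula, which requires the full row rank hypothesis on $A$ to ensure $AA^T$ is invertible.
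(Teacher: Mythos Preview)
Your proposal is correct and follows essentially the same approach as the paper: verify that $p^{\mathrm{next}}$ lands on the zero level set and that the displacement lies in $\mathrm{Im}\,A^T=(\ker A)^\perp$, then conclude orthogonal projection. The only cosmetic difference is that you use the explicit formula $A^\dagger=A^T(AA^T)^{-1}$ throughout, whereas the paper invokes the identities $AA^\dagger=I$ and $AA^\dagger A=A$ directly.
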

	
	
	For $l>1$ the computation of $D_x F(p;\theta_0)^\dagger$ requires inverting an $l \times l$ matrix; in this paper $l\in \set{1,2}$. 
	The directions $D_x F(p_i;\theta_0)$ can be computed efficiently using back-propagation where the points $p_i$, $i\in[n]$ are grouped into batches. We performed $10-20$ iterations of \Eqref{e:newton} for each $p_i$, $i\in[n]$. 
	


	\paragraph{Scalar networks.}
	For scalar networks, \ie, $l=1$, $D_x F\in \Real^{1\times d}$, a direct computation shows that 
	\begin{equation}\label{e:D_dagger_for_l_equals_1}
	D_x F(p;\theta_0)^\dagger = \frac{D_x F(p;\theta_0)^T}{\norm{D_x F(p;\theta_0)}^2}.  
	\end{equation}
	That is, the point $p$ moves towards the level set $\gS(\theta_0)$ by going in the direction of the steepest descent (or ascent) of $F$.

	It is worth mentioning that the projection-on-level-sets formula in the case of $l=1$ has already been developed in \citep{moosavi2016deepfool} and was used to find adversarial perturbations; our result generalizes to the intersection of several level sets and shows that this procedure is an instantiation of the generalized Newton algorithm.  
	
	The generalized Newton method (similarly to Newton method) is not guaranteed to find a point on the zero level set. We denote by $c_i=F(p_i;\theta_0)$ the level set values of the final point $p_i$; in the following, we use also points that failed to be projected with their level set $c_i$. Furthermore, we found that the generalized Newton method usually does find zeros of neural networks but these can be far from the initial projection point. Other, less efficient but sometimes more robust ways to project on neural level sets could be using gradient descent on $\norm{F(x;\theta)}$ or zero finding in the direction of a successful PGD attack \cite{kurakin2016adversarial,madry2017towards} as done in \cite{ding2018max}. In our robust networks application (Section \ref{sec:adv}) we have used a similar approach with the false position method. 
	
	\paragraph{Relation to Elsayed \etal~\cite{elsayed2018large}.} The authors of \cite{elsayed2018large} replace the margin distance with distance to the linearized network, while our approach is to directly sample the actual network's level set and move it explicitly. Specifically, for $L_2$ norm ($p=2$) Elsayed's method is similar to our method using a single generalized Newton iteration, \Eqref{e:newton}.

	\vspace{-5pt}
	\subsection{Differentiable sample position}\vspace{-5pt}
	Next, we would like to relate each sample $p$, belonging to some level set $F(p;\theta_0)=c$, to the network parameters $\theta$. Namely, $p=p(\theta)$. The functional dependence of a sample $p$ on $\theta$ is defined by $p(\theta_0)=p$ and $F(p(\theta);\theta) = c$, for $\theta$ in some neighborhood of $\theta_0$. The latter condition ensures that $p$ stays on the $c$ level set as the network parameters $\theta$ change. As only first derivatives are used in the optimization of neural networks, it is enough to replace this condition with its first-order version. We get the following two equations:
	\begin{equation}\label{e:necessary_cond}
	p(\theta_0)=p \quad ; \quad  \frac{\partial }{\partial \theta}\Big\vert_{\theta=\theta_0}F(p(\theta);\theta)=0.  
	\end{equation}
	
	
	Using the chain rule, the second condition in \Eqref{e:necessary_cond} reads:
	\begin{equation}\label{e:dev_chain}
	D_{x} F(p,\theta_0) D_\theta p(\theta_0) + D_\theta F(p,\theta_0)=0.
	\end{equation}
	This is a system of linear equations with $d\times m$ unknowns (the components of $D_\theta p(\theta_0)$) and $l\times m$ equations. When $d > l$, this linear system is under-determined. Similarly to what is used in the generalized Newton method, a minimal norm solution is given by the Moore-Penrose inverse: 
	\begin{equation}\label{e:min_norm_sol}
	D_\theta p(\theta_0) = -D_{x} F(p,\theta_0)^\dagger  D_\theta F(p,\theta_0).
	\end{equation}
	The columns of the matrix $D_\theta p(\theta_0) \in \Real^{d\times m}$ describe the velocity of $p(\theta)$ w.r.t.~each of the parameters in $\theta$. The pseudo-inverse selects the minimal norm solution that can be shown to represent, in this case, a movement in the orthogonal direction to the level set (see 
	Lemma~\ref{app:proofs}\ref{app:lemma_velocity} for a proof). We reiterate that for scalar networks, where $l=1$, $D_x F(p_i;\theta_0)^\dagger$ has a simple closed-form expression, as shown in \Eqref{e:D_dagger_for_l_equals_1}. 
	
	\paragraph{The sample network.} A possible simple solution to \Eqref{e:necessary_cond} would be to use the linear function $p(\theta)=p+D_\theta p(\theta_0)(\theta-\theta_0)$, with $D_\theta p(\theta_0)$ as defined in \Eqref{e:min_norm_sol}. Unfortunately, this would require storing $D_\theta p(\theta_0)$, using at-least $\gO(m)$ space (\ie, the number of network parameters), for every projection point $p$. (We assume the number of output channels $l$ of $F$ is constant, which is the case in this paper.) A much more efficient solution is 
	\begin{equation}\label{e:p}
	p(\theta) = p - D_x F(p;\theta_0)^\dagger \brac{ F(p;\theta) - c },
	\end{equation}
	that requires storing $D_x F(p;\theta_0)^\dagger$, using only $\gO(d)$ space, where $d$ is the input space dimension, for every projection point $p$.
	Furthermore, \Eqref{e:p} allows an efficient implementation with a single network $$G(p,D_x F(p;\theta_0)^\dagger ; \theta):=p(\theta).$$ We call $G$ the \emph{sample network}. Note that a collection of samples $p_i$, $i\in [n]$ can be treated as a batch input to $G$. 
	

	%
	%

	\vspace{-5pt}         
	\section{Incorporation of samples in loss functions}\vspace{-5pt} 
	Once we have the sample network $p_i(\theta)$, $i\in [n]$, we can incorporate it in a loss function to control the neural level set $\gS(\theta)$ in a desired way. We give three examples in this paper.

	\vspace{-5pt}
	\subsection{Geometric SVM}\vspace{-5pt}
	Support-vector machine (SVM) is a model which aims to train a linear binary classifier that would generalize well to new data by combining the hinge loss and a large margin term. It can be interpreted as encouraging large distances between training examples and the decision boundary.
	Specifically, the soft SVM loss takes the form \cite{cortes1995support}:
	\begin{equation}\label{e:svm}
	\mathrm{loss}(w,b) =  \frac{1}{N} \sum_{j=1}^{N} \max\set{0, 1-y_j\ell(x_j;w,b)} + \lambda \norm{w}_2^2,  
	\end{equation}
	where $(x_j,y_j)\in \Real^{d}\times \set{-1,1}$, $j\in [N]$ is the binary classification training data, and $\ell(x;w,b)=w^Tx+b$, $w\in \Real^d$, $b\in \Real$ is the linear classifier. We would like to generalize \Eqref{e:svm} to a deep network $F:\Real^d\times \Real^m \too \Real$ towards the goal of increasing the network's \emph{input space margin}, which is defined as the minimal distance of training examples to the decision boundary $\gS(\theta)=\set{x\in \Real^d \ \vert \ F(x;\theta)=0}$. Note that this is in strong contrast to standard deep network hinge loss that works with the \emph{output space margin} \cite{tang2013deep,sun2015large}, namely, measuring differences of output logits when evaluating the network on training examples. For that reason, this type of loss function does not penalize small input space margin, so long as it doesn't damage the output-level classification performance on the training data. Using the input margin over the output margin may also provide robustness to perturbations \cite{elsayed2018large}.
	
	We now describe a new, geometric formulation of \Eqref{e:svm}, and use it to define the soft-SVM loss for general neural networks. In the linear case, the following quantity serves as the margin:
	$$\norm{w}_2^{-1}=\dist(\gS(\theta),\gS_1(\theta))=\dist(\gS(\theta),\gS_{-1}(\theta))$$ 
	
	where $\gS_t(\theta)=\set{x\in \Real^d\ \vert \ F(x;\theta)=t }$, and $\dist(\gS(\theta),\gS_t(\theta))$ is the distance between the level sets, which are two parallel hyper-planes.
	In the general case, however, level sets are arbitrary hyper-surfaces which are not necessarily equidistant (\ie, the distance when traveling from $\gS$ to $\gS_t$ does not have to be constant across $\gS$). Hence, for each data sample $x$, we define the following margin function: $$\Delta(x;\theta)=\min \Big \{ \dist \big(p(\theta),\gS_{1}(\theta)\big),\dist\big(p(\theta),\gS_{-1}(\theta)\big)\Big\},$$ where $p(\theta)$ is the sample network of the projection of $x$ onto $\gS(\theta_0)$. Additionally, note that in the linear case: $\abs{w^Tx+b} = \dist(x,\gS(\theta))/\Delta(x;\theta)$. With these definitions in mind, \Eqref{e:svm} can be given the geometric generalized form:
	\begin{equation}\label{e:svm_geo}
	\mathrm{loss}(w,b) = \frac{1}{N}  \sum_{j=1}^{N} \max\set{0, 1-\mathrm{sign}(y_j F(x_j;\theta))\frac{\dist(x_j,p_j)}{\Delta(x_j;\theta)}} + \frac{\lambda}{N}  \sum_{j=1}^{N} \Delta(x_j;\theta)^\alpha ,
	\end{equation}
	where $F(x;\theta)$ is a general classifier (such as a neural network, in our applications). Note that in the case where $F(x;\theta)$ is affine, $\alpha=-2$ and $\dist=L_2$, \Eqref{e:svm_geo} reduces back to the regular SVM loss, \Eqref{e:svm}. Figure \ref{fig:teaser}d depicts the result of optimizing this loss in a 2D case, \ie, $x_j\in\Real^2$; the light blue and red curves represent $\gS_{-1}$ and $\gS_1$.

	\subsection{Robustness to adversarial perturbations}
	\label{sec:adv}
	
	The goal of robust training is to prevent a change in a model's classification result when small perturbations are applied to the input. Following \cite{madry2017towards} the attack model is specified by some set $S \subset \Real^d$ of allowed perturbations; in this paper we focus on the popular choice of $L_\infty$ perturbations, that is $S$ is taken to be the $\varepsilon$-radius $L_{\infty}$ ball, $\set{x\ \vert \ \norm{x}_\infty \leq \varepsilon}$. Let $(x_j,y_j)\in \Real^d \times \gL$ denote training examples and labels, and let $\gS^j(\theta)=\set{x\in\Real^d \ \vert \ F^j(x;\theta)=0}$, where $F^{j}(x;\theta) = f_{j}-\max_{i\ne j}f_i$, the decision boundary of label $j$. We define the loss 
	\begin{equation}\label{e:adversarial}
	\mathrm{loss}(\theta)=\frac{1}{N}\sum_{j=1}^N \lambda_j \max\Big\{ 0, \varepsilon_j - \mathrm{sign}(F^{y_j}(x_j;\theta))\dist(x_j,\gS^{y_j}(\theta)) \Big \},
	\end{equation} 
	where $\dist(x,\gS^j)$ is some notion of a distance between $x$ and $\gS^j$, \eg, $\min_{y\in \gS^j}\norm{x-y}_p$ or $\int_{\gS^j(\theta)} \norm{x-y}_p d\mu(y)$, $d\mu$ is some probability measure on $\gS^j(\theta)$.
	The parameter $\lambda_j$ controls the weighting between correct (\ie, $F^{y_j}(x_j;\theta)>0$) and incorrect (\ie, $F^{y_j}(x_j;\theta)<0$) classified samples. We fix $\lambda_j = 1$ for incorrectly classified samples and set $\lambda_j$ to be the same for correctly classified samples;
	The parameter $\varepsilon_j$ controls the desired target distances; 
	%
	Similarly to \cite{elsayed2018large}, the idea of this loss is: (i) if $x_j$ is incorrectly classified, pull the decision boundary $\gS^{y_j}$ toward $x_j$; (ii) if $x_j$ is classified correctly, push the decision boundary $\gS^{y_j}$ to be within a distance of at-least $\varepsilon_j$ from $x_j$.

	\begin{wrapfigure}[4]{r}{0.1\textwidth}
		\vspace{-15pt}
		\begin{center}
			\includegraphics[width=0.1\textwidth]{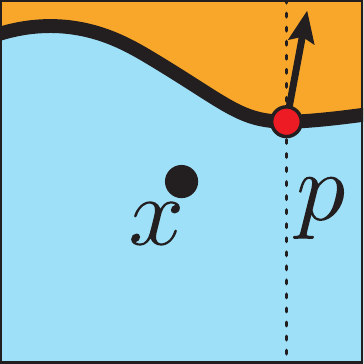}
		\end{center}
	\end{wrapfigure}
	In our implementation we have used $\dist(x,\gS^j)=\rho(x,p(\theta))$, where $p=p(\theta_0)\in \gS^j$ is a sample of this level set; $\rho(x,p)=\abs{x^{i_*}-p^{i_*}}$, and $x^{i_*},p^{i_*}$ denote the $i_*$-th coordinates of $x,p$ (resp.), $i_* = \argmax_{i\in [d]} \abs{D_x F(p;\theta_0)}$. This loss encourages $p(\theta)$ to move in the direction of the axis (\ie, $i_*$) that corresponds to the largest component in the gradient $D_x F(p;\theta_0)$. Intuitively, as $\rho(x,p)\leq \norm{x-p}_\infty$, the loss pushes the level set $\gS^j$ to leave the $\varepsilon_j$-radius $L_\infty$-ball in the direction of the axis that corresponds to the maximal speed of $p(\theta)$. The inset depicts an example where this distance measure is more effective than the standard $L_\infty$ distance: $D_x F(p;\theta)$ is shown as black arrow and the selected axis $i_*$ as dashed line.

	In the case where the distance function $\min_{y\in \gS(\theta)}\norm{x-y}_p$ is used, the computation of its gradient using \Eqref{e:min_norm_sol} coincides with the gradient derivation of \cite{ding2018max} up to a sign difference. Still, our derivation allows working with general level set points (\ie, not just the closest) on the decision boundary $\gS^j$, and our sample network offers an efficient implementation of these samples in a loss function. Furthermore, we use the same loss for both correctly and incorrectly classified examples. 
	

	\subsection{Manifold reconstruction}
	\paragraph{Surface reconstruction.} Given a point cloud $\gX = \set{x_j}_{j=1}^N \subset \Real^d$ that samples, possibly with noise, some surface $\gM\subset \Real^3$, our goal is to find parameters $\theta$ of a network $F:\Real^3\times \Real^m\too \Real$, so that the neural level set $\gS(\theta)$ approximates $\gM$. Even more desirable is to have $F$ approximate the signed distance function to the unknown surface sampled by $\gX$. To that end, we would like the neural level set $\gS_t(\theta)$, $t \in \gT$ to be of distance $|t|$ to $\gX$, where $\gT \subset \Real$ is some collection of desired level set values. Let $\dist(x,\gX)=\min_{j\in[N]} \norm{x-x_j}_2$ be the distance between $x$ and $\gX$. 
	We consider the reconstruction loss
	\begin{equation}\label{e:reconstruct}
	\mathrm{loss}(\theta) = \sum_{t\in\gT}\brac{\int_{\gS_t(\theta)} \Big |\dist(x,\gX)-\abs{t}\Big |^p dv(x)}^{\frac{1}{p}} + \frac{\lambda}{N} \sum_{j=1}^N \abs{F(x_j;\theta)}, 
	\end{equation}
	where $dv(x)$ is the normalized volume element on $\gS_t(\theta)$ and $\lambda>0$ is a parameter. The first part of the loss encourages the $t$ level set of $F$ to be of distance $|t|$ to $\gX$; note that for $t=0$ this reconstruction error was used in level set surface reconstruction methods \cite{zhao2001fast}. The second part of the loss penalizes samples $\gX$ outside the zero level set $\gS(\theta)$. 
	
	\paragraph{Curve reconstruction.} In case of approximating a manifold $\gM\subset \Real^d$ with co-dimension greater than $1$, \eg, a curve in $\Real^3$, one cannot expect $F$ to approximate the signed distance function as no such function exists. Instead, we model the manifold via the level set of a vector-valued network $F:\Real^d\times \Real^m \too \Real^l$ whose zero level set is an intersection of $l$ hyper-surfaces. As explained in Section \ref{s:sample_network}, this generically defines a $d-l$ manifold. In that case we used the loss in \Eqref{e:reconstruct} with $\gT=\set{0}$, namely, only encouraging the zero level set to be as close as possible to the samples $\gX$. 
	
	\vspace{-5pt}
	\section{Universality}\vspace{-5pt}
	To theoretically support the usage of neural level sets for modeling manifolds or controlling decision boundaries we provide a geometric universality result for multilayer perceptrons (MLP) with ReLU activations. That is, the level sets of MLPs can represent any watertight piecewise linear hyper-surface (\ie, manifolds of co-dimension $1$ in $\Real^d$ that are boundaries of $d$-dimensional polytopes). More specifically, we prove:
	\begin{theorem}\label{e:levelset_universality}
		Any watertight, not necessarily bounded, piecewise linear hypersurface $\gM\subset \Real^d$ can be exactly represented as the neural level set $\gS$ of a multilayer perceptron with ReLU activations, $F:\Real^{d}\too\Real$. \vspace{-5pt}
	\end{theorem}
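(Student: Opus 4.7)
My strategy is to construct an explicit continuous piecewise linear (CPWL) function $F : \mathbb{R}^d \to \mathbb{R}$ whose zero set equals $\gM$, and then invoke the known representation theorem of Arora, Basu, Mianjy, and Mukherjee, which states that every CPWL function $\mathbb{R}^d \to \mathbb{R}$ is exactly realizable by an MLP with ReLU activations (using finitely many hidden layers). The main work is thus in the geometric construction of $F$ with the correct sign structure.

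\textbf{Step 1: Decompose the interior and the exterior.} Since $\gM$ is watertight, it is the boundary of a closed $d$-dimensional polyhedral region $\Omega \subset \mathbb{R}^d$, so $\partial \Omega = \gM$ and $\overline{\mathbb{R}^d \setminus \Omega} = \mathbb{R}^d \setminus \mathrm{int}(\Omega)$. By standard polyhedral theory, both $\Omega$ and $\overline{\mathbb{R}^d \setminus \Omega}$ admit finite decompositions into closed convex polyhedra (possibly unbounded), written $\Omega = \bigcup_\alpha P_\alpha$ and $\overline{\mathbb{R}^d\setminus\Omega} = \bigcup_\beta Q_\beta$, where pieces meet only along common faces. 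For each convex polyhedron $P_\alpha = \bigcap_{j} \{\ell_{\alpha,j}(x) \geq 0\}$ with affine $\ell_{\alpha,j}$, define the ``inward'' function $h_\alpha(x) = \min_j \ell_{\alpha,j}(x)$. This is CPWL, with $h_\alpha > 0$ on $\mathrm{int}(P_\alpha)$, $h_\alpha = 0$ on $\partial P_\alpha$, and $h_\alpha < 0$ on $\mathbb{R}^d \setminus P_\alpha$. Define $h_\beta$ analogously for each $Q_\beta$.

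\textbf{Step 2: Assemble the signed indicator and check its zero set.} Set
\begin{equation*}
F(x) \;=\; \max_\alpha h_\alpha(x) \;-\; \max_\beta h_\beta(x).
\end{equation*}
I claim $\{F=0\}=\gM$ and $F$ has the correct sign on each side. For $x \in \mathrm{int}(\Omega)$, $x \notin \overline{\mathbb{R}^d\setminus\Omega}$, so $x \notin Q_\beta$ for any $\beta$, giving $h_\beta(x) < 0$ strictly; on the $P_\alpha$ side, $x \in P_\alpha$ for some $\alpha$ so $\max_\alpha h_\alpha(x) \geq 0$; hence $F(x) > 0$. Symmetrically $F(x) < 0$ on $\mathrm{int}(\mathbb{R}^d \setminus \Omega)$. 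For $x \in \gM$, since $x \in \partial \Omega$ it belongs to some $\partial P_\alpha$, and for any other $\alpha'$ with $x \in P_{\alpha'}$ one also has $x \in \partial P_{\alpha'}$ (otherwise $x \in \mathrm{int}(\Omega)$, contradicting $x \in \partial\Omega$), so $h_{\alpha'}(x) \leq 0$ for all $\alpha'$ and $\max_\alpha h_\alpha(x) = 0$. The same argument yields $\max_\beta h_\beta(x) = 0$, so $F(x) = 0$.

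\textbf{Step 3: Realize $F$ as a ReLU MLP.} The function $F$ is a finite combination (max, min, affine, subtraction) of affine functions, hence CPWL. By the representation theorem of Arora et al., every CPWL function $\mathbb{R}^d \to \mathbb{R}$ is exactly representable by an MLP with ReLU activations (one can also make this explicit: $\max(a,b) = \mathrm{ReLU}(a-b)+b$ and $\min(a,b) = -\max(-a,-b)$, applied recursively). This produces the required network with $\gS = \gM$.

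\textbf{Main obstacle.} The delicate point is Step 2, and in particular ruling out spurious zeros of $F$ coming from the \emph{internal} faces of the convex decompositions. Specifically, for a point $x$ lying on a face shared by two convex pieces $P_\alpha,P_{\alpha'}$ but inside $\mathrm{int}(\Omega)$, one gets $\max_\alpha h_\alpha(x)=0$; the construction works only because then $\max_\beta h_\beta(x)$ is \emph{strictly} negative, which is why subtracting the two maxima (rather than using a single $\max_\alpha h_\alpha$) is essential. The second subtle point, which I would rely on classical polyhedral geometry to settle, is that a general (possibly non-convex, possibly unbounded) polyhedral region and its complement always admit such finite convex decompositions; this is standard but deserves a citation in the full proof.
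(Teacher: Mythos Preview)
Your proof is correct and shares the paper's overall blueprint: build an explicit continuous piecewise linear function whose zero set is exactly $\gM$, then invoke the Arora--Basu--Mianjy--Mukherjee representation theorem to realize it as a ReLU MLP. The constructions, however, differ. The paper works only with the hyperplane arrangement generated by the facet hyperplanes $h_i$ of $\gM$, collects those cells $P_\lambda$ (indexed by sign vectors $\lambda\in\{-1,0,1\}^k$) that lie inside the enclosed region $P$, and sets $f(x)=\max_{\lambda\in\Lambda}\min_{i\in[k]} \lambda_i h_i(x)$; the delicate part of its argument is a two-case analysis showing that the interiors of the selected cells cover $\mathring P$. You instead decompose \emph{both} $\Omega$ and its closed complement into arbitrary convex polyhedra and take the difference $F=\max_\alpha h_\alpha-\max_\beta h_\beta$. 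Your subtraction is exactly what kills the spurious zeros on internal faces of the decomposition---the issue you correctly flag as the main obstacle---since at such points $\max_\alpha h_\alpha=0$ but $\max_\beta h_\beta<0$; the paper's single max--min formulation has to confront the same issue through its Case~2 treatment of sign vectors with zero entries. Your route is slightly more general (any face-to-face convex decomposition suffices, not just the full arrangement) and arguably cleaner on the internal-face point, at the price of invoking the existence of finite convex decompositions for a non-convex polyhedral region and its complement; as you note, this is classical and just needs a citation.
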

	The proof of this theorem is given 
	in Appendix~\ref{app:proofs}. Note that this theorem is a geometrical version of Theorem 2.1 in \cite{arora2016understanding}, asserting that MLPs with ReLU activations can represent any piecewise linear continuous function.

	\vspace{-5pt}
	\section{Experiments}\vspace{-5pt}
	\begin{wrapfigure}[7]{r}{0.6\textwidth}
		\vspace{-55pt}
		\begin{tabular}{@{}c@{}c@{}c@{}}
			\includegraphics[width=0.2\columnwidth]{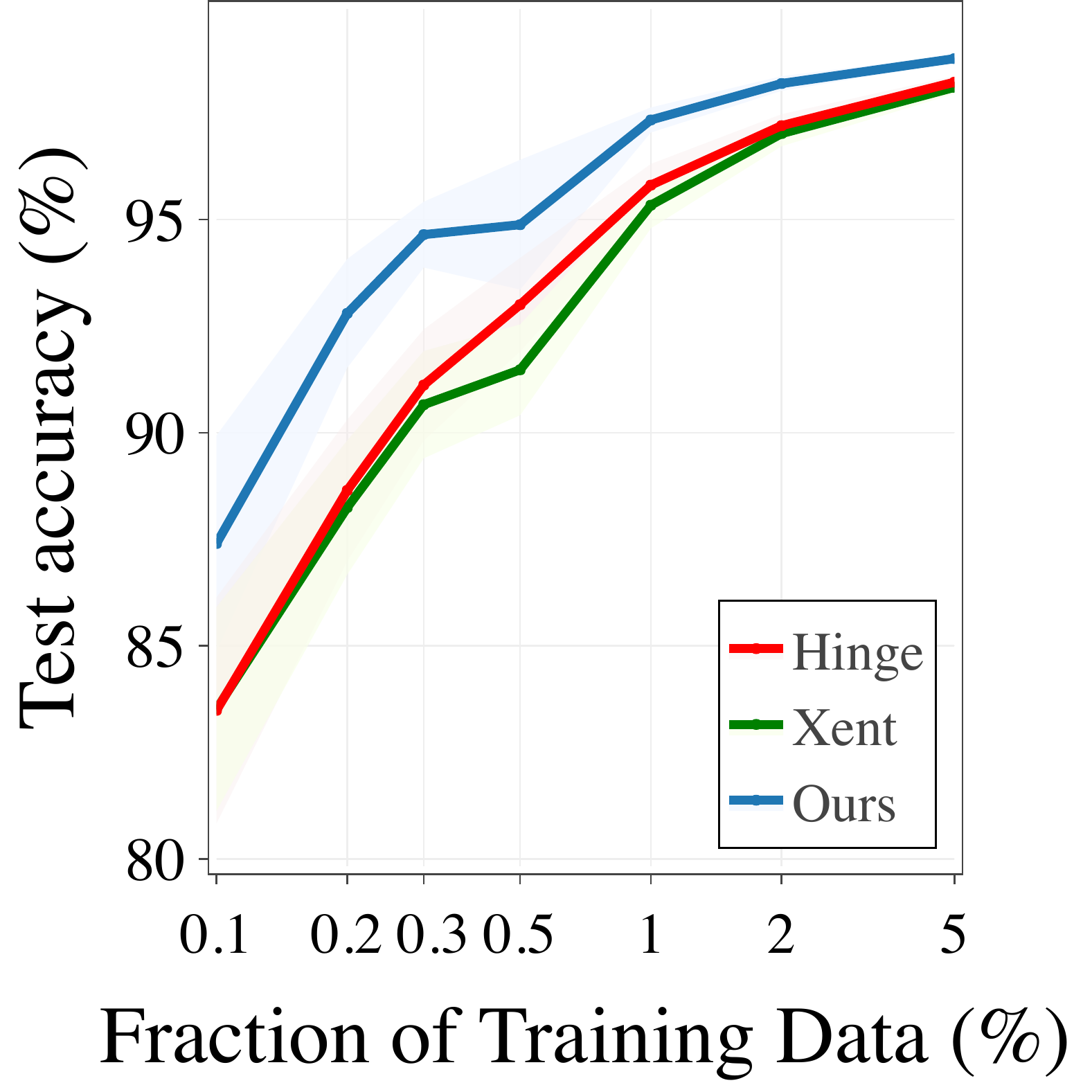} &
			\includegraphics[width=0.2\columnwidth]{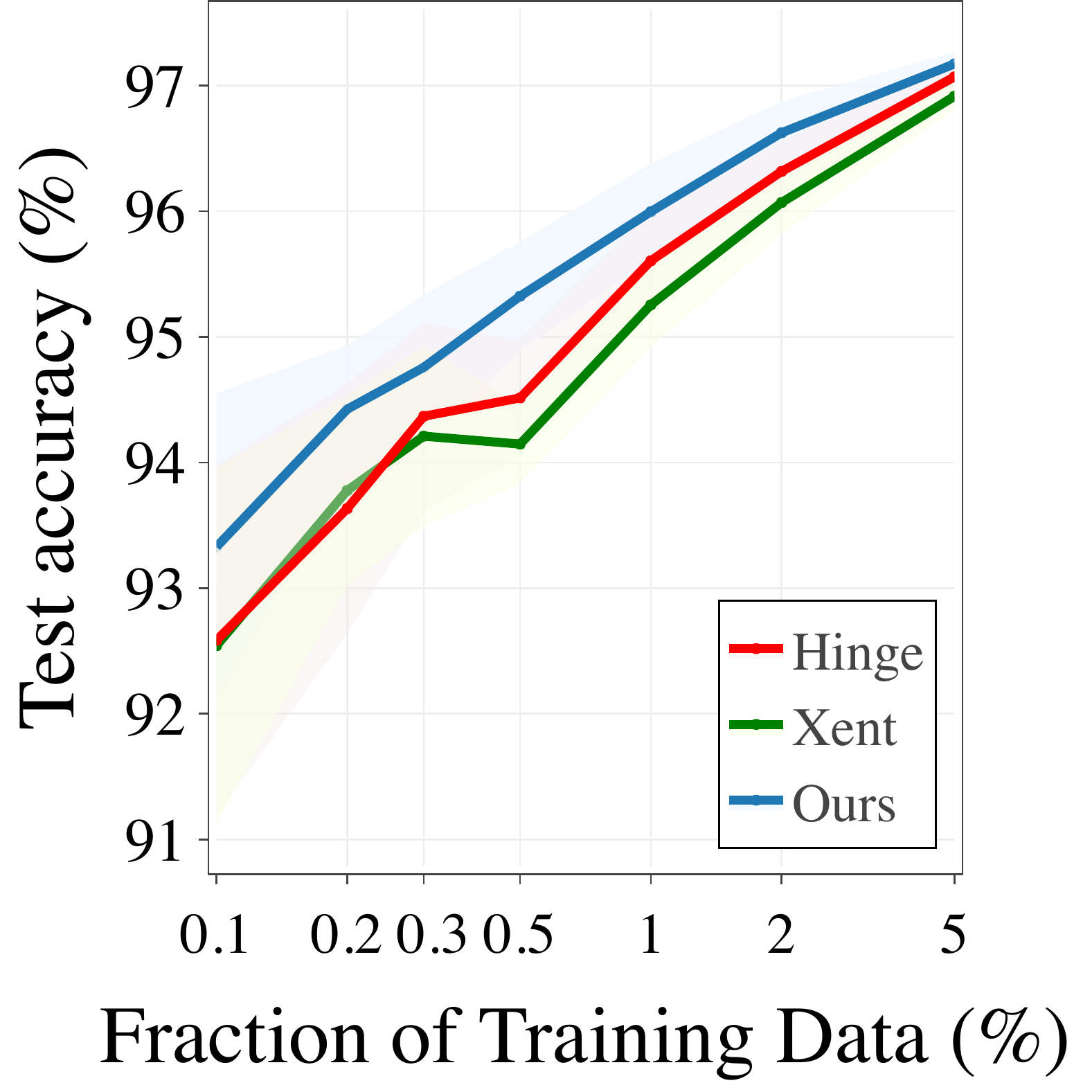} &
			\includegraphics[width=0.2\columnwidth]{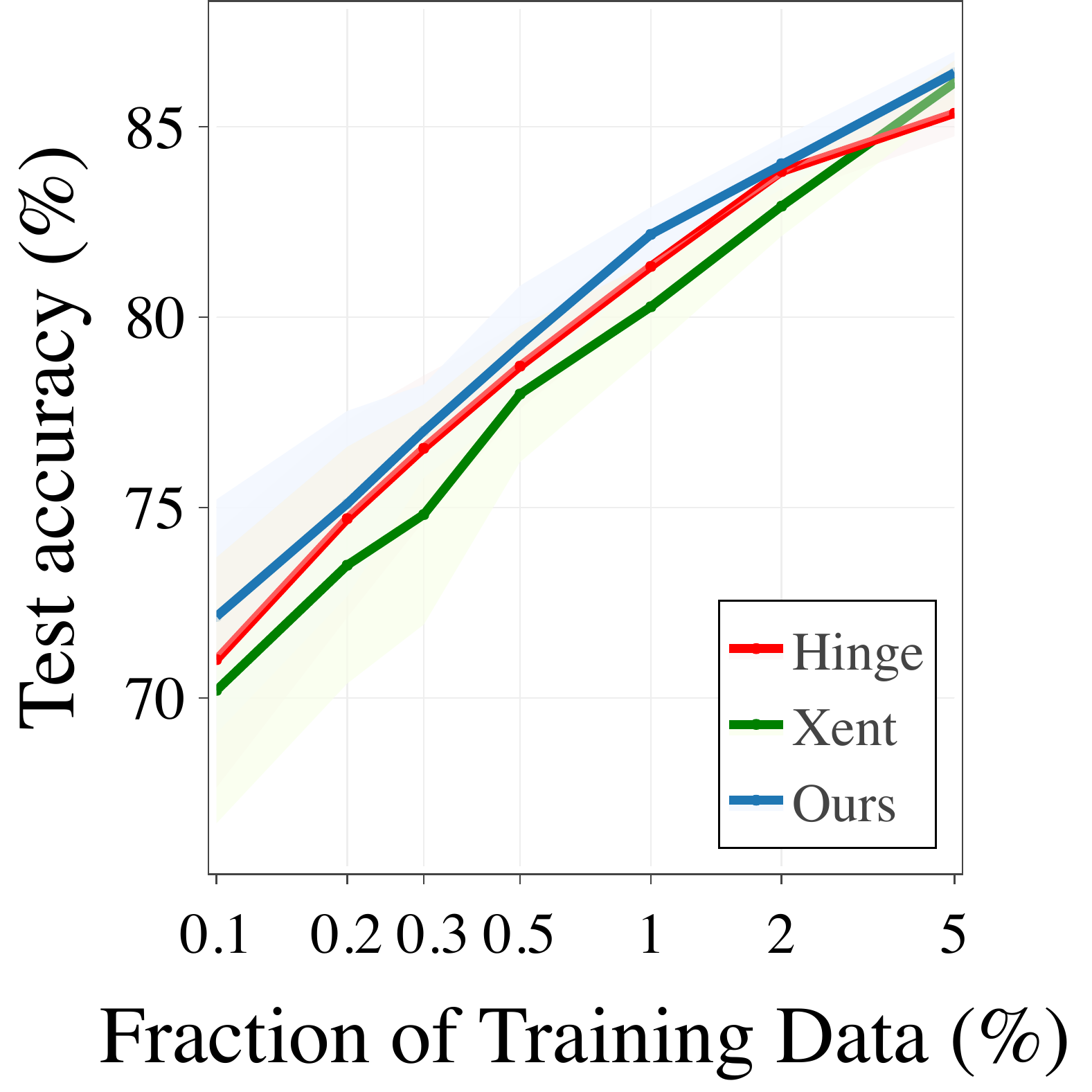} \\
			(a) & (b) & (c) 
		\end{tabular}\vspace{-5pt}
		\caption{Generalization from small fractions of the data.}\label{fig:generalization}
	\end{wrapfigure}
	\subsection{Classification generalization} 
	\label{sec:exp_clasgen}
	
	In this experiment, we show that when training on small amounts of data, our geometric SVM loss (see \Eqref{e:svm_geo}) generalizes better than the cross entropy loss and the hinge loss. 
	Experiments were done on three datasets: MNIST \cite{lecun1998mnist}, Fashion-MNIST \cite{xiao2017fashion} and CIFAR10 \cite{krizhevsky2009learning}. For all datasets we arbitrarily merged the labels into two classes, resulting in a binary classification problem. We randomly sampled a fraction of the original training examples and evaluated on the original test set.

	Due to the variability in the results, we rerun the experiment 100 times for MNIST and 20 times for Fashion-MNIST and CIFAR10. We report the mean accuracy along with the standard deviation. Figure \ref{fig:generalization} shows the test accuracy of our loss compared to the cross-entropy loss and hinge loss over different training set sizes for MNIST (a), Fashion-MNIST (b) and CIFAR10 (c). Our loss function outperforms the standard methods.

	For the implementation of \Eqref{e:svm_geo} we used $\alpha=-1$, $\dist = L_\infty$, and approximated $\dist(x,\gS_t)\approx \norm{x-p^t}_\infty$, where $p^t$ denotes the projection of $p$ on the level set $\gS_t$, $t\in\set{-1,0,1}$ (see Section \ref{ss:newton}). The approximation of the term $\Delta(x;\theta)$, where $x=x_j$ is a train example, is therefore $\min\set{ \|p^0-p^{-1}\|_\infty, \|p^0-p^1\|_\infty }$. See 
	Appendix~\ref{app:imp_gen} for further implementation details.


	\subsection{Robustness to adversarial examples}
	\label{sec:exp_adv}
	
	In this experiment we used our method with the loss in \Eqref{e:adversarial} to train robust models on MNIST \cite{lecun1998mnist} and CIFAR10 \cite{krizhevsky2009learning} datasets. For MNIST we used ConvNet-4a (312K params) used in \cite{zhang2019theoretically} and for CIFAR10 we used two architectures: ConvNet-4b (2.5M params) from \cite{wong2018scaling} and ResNet-18 (11.2M params) from \cite{zhang2019theoretically}. 
	%
	%
	We report results using the loss in \Eqref{e:adversarial} with the choice of $\varepsilon_j$ fixed as $\varepsilon_\text{train}$ in Table \ref{tab:robustness_results},  $\lambda_j$ to be $1,10$ for MNIST and CIFAR10 (resp.), and $\dist=\rho$ as explained in Section \ref{sec:adv}. 
	We evaluated our trained networks on $L_\infty$ bounded attacks with $\varepsilon_\text{attack}$ radius using Projected Gradient Descent (PGD) \cite{kurakin2016adversarial,madry2017towards} and compared to networks with the same architectures trained using the methods of Madry \etal~\cite{madry2017towards} and TRADES \cite{zhang2019theoretically}.
	We found our models to be robust to PGD attacks based on the Xent loss; during the revision of this paper we discovered weakness of our trained models to PGD attack based on the margin loss, \ie, $\min \set{F^j}$, where $F^j=f_j-\max_{i\ne j} f_i$; we attribute this fact to the large gradients created at the level set. Consequently, we added margin loss attacks to our evaluation.  The results are summarized in Table \ref{tab:robustness_results}. Note that although superior in robustness to Xent attacks we are comparable to baseline methods for margin attacks. Furthermore, we believe the relatively low robust accuracy of our model when using the ResNet-18 architecture is due to the fact that we didn't specifically adapt our method to Batch-Norm layers. 
	
	In Appendix~\ref{app:adv_rob} we provide tables summarizing robustness of our trained models (MNIST ConvNet-4a and CIFAR10 ConvNet-4b) to black-box attacks; we log black-box attacks of our and baseline methods \cite{madry2017towards,zhang2019theoretically} in an all-versus-all fashion. In general, we found that all black-box attacks are less effective than the relevant white-box attacks, our method performs better when using standard model black-box attacks, and that all three methods compared are in general similar in their black-box robustness.



	\begin{table}[t]
		\centering
		\setlength\tabcolsep{2pt} 
		\begin{tabular}{c}
			\begin{adjustbox}{max width=\textwidth}
				\aboverulesep=0ex
				\belowrulesep=0ex
				\renewcommand{\arraystretch}{1.1}
				\begin{tabular}[t]{l||l|l|l|c|c|c|c}
					\toprule
					Method                                  &
					Dataset & Arch.  & Attack  & $\varepsilon_\text{train}$ & Test Acc. & Rob. Acc. Xent & Rob. Acc. Margin \\
					\midrule
					Standard                                & 
					MNIST & ConvNet-4a &     PGD$^{40}$($\varepsilon_\text{attack}=0.3$) & - &
					99.34\% &  13.59\% & 0.00\%   \\
					Madry et al. \cite{madry2017towards}    & 
					MNIST & ConvNet-4a &     PGD$^{40}$($\varepsilon_\text{attack}=0.3$) & $0.3$ &
					99.35\% &  96.04\% & 96.11\%   \\
					Madry et al. \cite{madry2017towards}    & 
					MNIST & ConvNet-4a &     PGD$^{40}$($\varepsilon_\text{attack}=0.3$) & $0.4$ &
					99.16\% &  96.54\% & 96.53\%   \\
					TRADES \cite{zhang2019theoretically}    & 
					MNIST & ConvNet-4a &     PGD$^{40}$($\varepsilon_\text{attack}=0.3$) & $0.3$ &
					98.97\% &  96.75\% & 96.74\%   \\
					TRADES \cite{zhang2019theoretically}    & 
					MNIST & ConvNet-4a &     PGD$^{40}$($\varepsilon_\text{attack}=0.3$) & $0.4$ &
					98.62\% &  96.78\% & 96.76\%   \\
					Ours                                    &
					MNIST & ConvNet-4a &     PGD$^{40}$($\varepsilon_\text{attack}=0.3$) & $0.4$ &
					99.35\% &  99.23\% & 97.35\%   \\
					\midrule
					Standard                               & 
					CIFAR10 & ConvNet-4b &  PGD$^{20}$ ($\varepsilon_\text{attack}=0.031$)& - &
					83.67\% & 0.00\% & 0.00\%        \\
					Madry et al. \cite{madry2017towards} & 
					CIFAR10 & ConvNet-4b &  PGD$^{20}$ ($\varepsilon_\text{attack}=0.031$)& $0.031$ &
					71.86\% & 39.84\% & 38.18\%        \\
					Madry et al. \cite{madry2017towards} & 
					CIFAR10 & ConvNet-4b & PGD$^{20}$ ($\varepsilon_\text{attack}=0.031$)&  $0.045$ &
					63.66\% & 41.53\% & 39.13\%        \\
					TRADES \cite{zhang2019theoretically}   & 
					CIFAR10 & ConvNet-4b & PGD$^{20}$ ($\varepsilon_\text{attack}=0.031$)&  $0.031$ &
					71.24\% & 41.89\% & 38.4\%        \\
					TRADES \cite{zhang2019theoretically} & 
					CIFAR10 & ConvNet-4b &  PGD$^{20}$ ($\varepsilon_\text{attack}=0.031$)& $0.045$ &
					68.24\% & 42.04\% & 38.18\%        \\
					Ours                               & 
					CIFAR10 & ConvNet-4b & PGD$^{20}$ ($\varepsilon_\text{attack}=0.031$)&  $0.045$ &
					71.96\% & 38.45\% & 38.54\%      \\
					\midrule
					Standard                               & 
					CIFAR10 & ResNet-18 &  PGD$^{20}$ ($\varepsilon_\text{attack}=0.031$)& - &
					93.18\% & 0.00\% & 0.00\%    \\
					Madry et al. \cite{madry2017towards}      & 
					CIFAR10 & ResNet-18 & PGD$^{20}$ ($\varepsilon_\text{attack}=0.031$)&  $0.031$ &
					81.0\% & 47.29\% & 46.58\%    \\
					Madry et al. \cite{madry2017towards}      & 
					CIFAR10 & ResNet-18 &  PGD$^{20}$ ($\varepsilon_\text{attack}=0.031$)& $0.045$ &
					74.97\% & 49.84\% & 48.02\%    \\
					TRADES \cite{zhang2019theoretically}      & 
					CIFAR10 & ResNet-18 &  PGD$^{20}$ ($\varepsilon_\text{attack}=0.031$)& $0.031$ &
					83.04\% & 53.31\% & 51.36\%    \\
					TRADES \cite{zhang2019theoretically}      & 
					CIFAR10 & ResNet-18 &  PGD$^{20}$ ($\varepsilon_\text{attack}=0.031$)& $0.045$ &
					79.52\% & 53.49\% & 51.22\%    \\
					Ours                                 & 
					CIFAR10 & ResNet-18 & PGD$^{20}$ ($\varepsilon_\text{attack}=0.031$)&  $0.045$ &
					81.3\% & 79.74\% & 43.17\%    \\
					\bottomrule
				\end{tabular} 
			\end{adjustbox}
			
		\end{tabular}
		\vspace{3pt}
		\caption{Results of different $L_\infty$-bounded attacks on models trained using our method (described in Section~\ref{sec:adv}) compared to other methods. } 
		\label{tab:robustness_results}
	\end{table}


	\begin{figure}
		\centering
		\includegraphics[width=\textwidth]{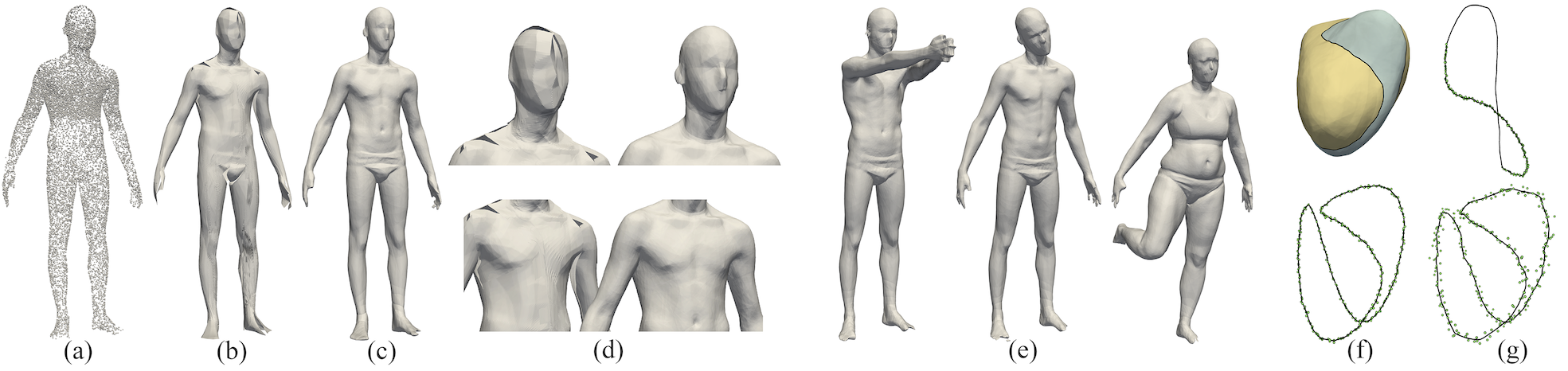}\vspace{-5pt}
		\caption{Point cloud reconstruction. Surface: (a) input point cloud; (b) AtlasNet \cite{groueix2018papier} reconstruction; (c) our result; (d) blow-ups; (e) more examples of our reconstruction. Curve: (f) bottom image shows a curve reconstructed (black line) from a point cloud (green points) as an intersection of two scalar level sets (top image); (g) bottom shows curve reconstruction from a point cloud with large noise, where top image demonstrates the graceful completion of an open curve point cloud data. }\vspace{-12pt}
		\label{fig:recon}
	\end{figure}

	\subsection{Surface and curve reconstruction}
	\label{ss:curves_and_surfaces}
	
	In this experiment we used our method to reconstruct curves and surfaces in $\Real^3$ using only incomplete point cloud data $\gX\subset\Real^3$, which is an important task in 3D shape acquisition and processing. Each point cloud is processed independently using the loss  function described in \Eqref{e:reconstruct}, which encourages the zero level set of the network to pass through the point cloud. For surfaces, it also moves other level sets to be of the correct distance to the point cloud. 
	
	\begin{wraptable}[7]{r}{0.51\columnwidth}\vspace{-12pt}
		\centering
		\resizebox{0.5\textwidth}{!}{    
			\begin{tabular}{lrr} 
				& Chamfer L1                & Chamfer L2 \\
				\midrule
				AtlasNet-1 sphere    & ${23.56 \pm 2.91}$      & ${17.69 \pm 2.45}$   \\
				AtlasNet-1 patch    & ${18.67 \pm 3.45}$      & ${13.38 \pm 2.66}$   \\
				AtlasNet-25 patches   & ${11.54 \pm 0.53}$      & ${7.89 \pm 0.42}$   \\
				Ours        & ${\mathbf{10.71}\pm 0.63 }$    & ${\mathbf{7.32} \pm 0.46}$ \\
			\end{tabular} 
		}\vspace{-0pt}
		\caption{Surface reconstruction results.}
		\label{tab:surface_recon}
	\end{wraptable}
	For surface reconstruction, we trained on 10 human raw scans from the FAUST dataset \cite{Bogo:CVPR:2014}, where each scan consists of $\sim 170$K points in $\Real^3$. The scans include partial connectivity information which we do not use. After convergence, we reconstruct the mesh using the marching cubes algorithm \cite{lorensen1987marching} sampled at a resolution of $\left[100\right]^3$. 
	Table \ref{tab:surface_recon} compares our method with the recent method of \cite{groueix2018papier} which also works directly with point clouds. Evaluation is done using the Chamfer distance \cite{fan2017point} computed between $30$K uniformly sampled points from our and \cite{groueix2018papier} reconstructed surfaces and the ground truth registrations provided by the dataset, with both $L_1, L_2$ norms. Numbers in the table are multiplied by $10^{3}$. We can see that our method outperforms its competitor; Figure \ref{fig:recon}b-\ref{fig:recon}e show examples of surfaces reconstructed from a point cloud (a batch of $10$K points is shown in \ref{fig:recon}a) using our method (in \ref{fig:recon}c, \ref{fig:recon}d-right, \ref{fig:recon}e), and the method of \cite{groueix2018papier} (in \ref{fig:recon}b, \ref{fig:recon}d-left). Importantly, we note that there are recent methods for implicit surface representation using deep neural networks \cite{chen2018learning,park2019deepsdf,mescheder2018occupancy}. These methods use signed distance information and/or the occupancy function of the ground truth surfaces and perform regression on these values. Our formulation, in contrast, allows working directly on the more common, raw input of point clouds. 
	
	For curve reconstruction, we took a noisy sample of parametric curves in $\Real^3$ and used similar network to the surface case, except its output layer consists of two values. We trained the network with the loss \Eqref{e:reconstruct}, where $\gT=\set{0}$, using similar settings to the surface case. Figure \ref{fig:recon}f shows an example of the input point cloud (in green) and the reconstructed curve (in black) (see bottom image), as well as the two hyper-surfaces of the trained network, the intersection of which defines the final reconstructed curve (see top image); \ref{fig:recon}g shows two more examples: reconstruction of a curve from higher noise samples (see bottom image), and reconstruction of a curve from partial curve data (see top image); note how the network gracefully completes the curve.  
	
	\vspace{-8pt}
	\section{Conclusions}\vspace{-8pt}
	
	We have introduced a simple and scalable method to incorporate level sets of neural networks into a general family of loss functions. Testing this method on a wide range of learning tasks we found the method particularly easy to use and applicable in different settings. Current limitations and interesting venues for future work include: applying our method with the batch normalization layer (requires generalization from points to batches); investigating control of intermediate layers' level sets; developing sampling conditions to ensure coverage of the neural level sets; and employing additional geometrical regularization to the neural level sets (\eg, penalize curvature).

	

	\subsection*{Acknowledgments}
	This research was supported in part by the European Research Council (ERC Consolidator Grant, "LiftMatch" 771136) and the Israel Science Foundation (Grant No. 1830/17).

	\nocite{paszke2017automatic}
	
	{\small
		\bibliographystyle{ieee}
		\bibliography{levelset}
	}

	\appendix

	

	\section{Additional Experiments}
	
	\renewcommand{\thetable}{A\arabic{table}}
	\setcounter{table}{0}
	\renewcommand{\thefigure}{A\arabic{figure}}
	\setcounter{figure}{0}
	
	\begin{figure}[h!]
		\centering
		\vspace{-10pt}
		\hspace{-10pt}
		\begin{minipage}{0.4\textwidth}
			\centering
			\begin{tabular}[t]{ccc}
				Initialization Method  & Chamfer & Hausdorf  \\ \hline
				Uniform [-0.35,0.35] & 0.011 & 0.141 \\
				Normal $\sigma=0.01$ & 0.006 & 0.017\\
				Normal $\sigma=0.05$ & 0.01 & 0.132
			\end{tabular}
		\end{minipage}
		\begin{minipage}{0.6\textwidth}
			\centering
			\begin{tabular}{cc}
				\includegraphics[width=0.25\columnwidth]{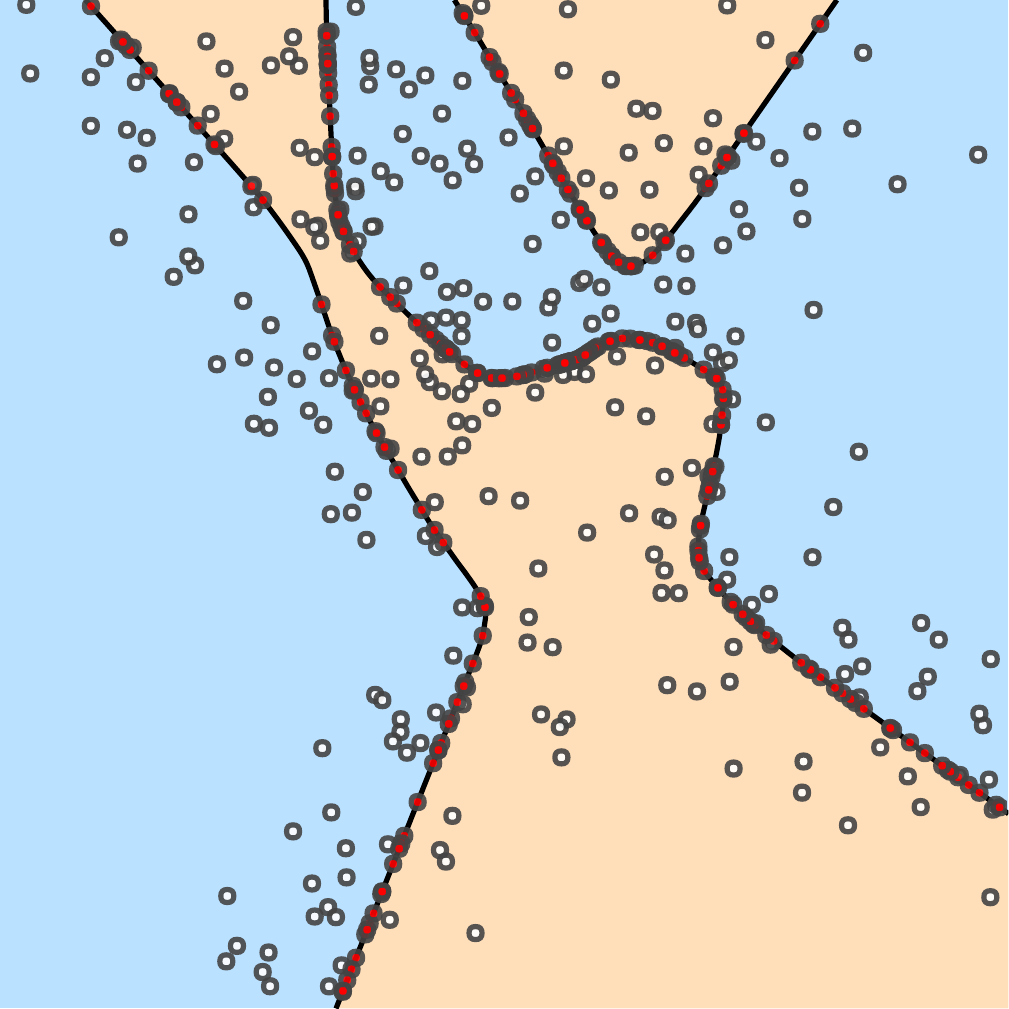} &
				\includegraphics[width=0.25\columnwidth]{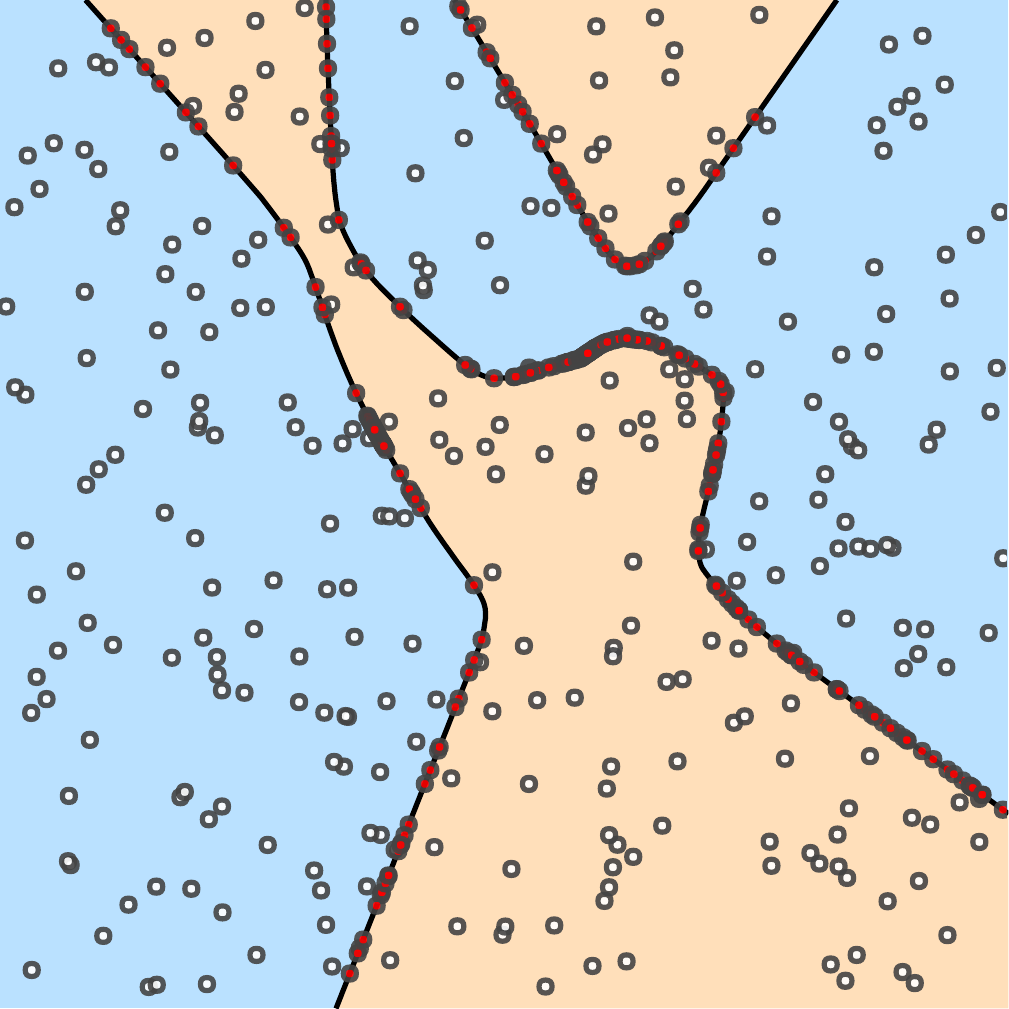} \cr
				{\tiny (a) Normal $\sigma = 0.05$} & {\tiny  (b) Uniform }
			\end{tabular}\vspace{-5pt}
		\end{minipage}
		\caption{Distribution of samples on $d=2$ neural level set.}\label{fig:distribution}
	\end{figure}
	
	\subsection{Distribution of points on the level set.} 
	Achieving well distributed samples of neural level set is a challenge, especially for high dimensions. In the inset we quantify the quality of distribution in low dimension, $d=2$, (where ground truth dense sampling of the level set is tractable). The table in Figure \ref{fig:distribution} logs the Chamfer and Hausdorff distances of the resulting sampling distribution and the level set of a neural network trained with Xent loss in 2 dimensions ((a) and (b)) where projected points (red) are initialized using a uniformly distributed points (gray, (b)) or normally perturbed level set samples (gray, (a)).

	\begin{figure}[h!]
		\centering
		\includegraphics[width=0.4\textwidth]{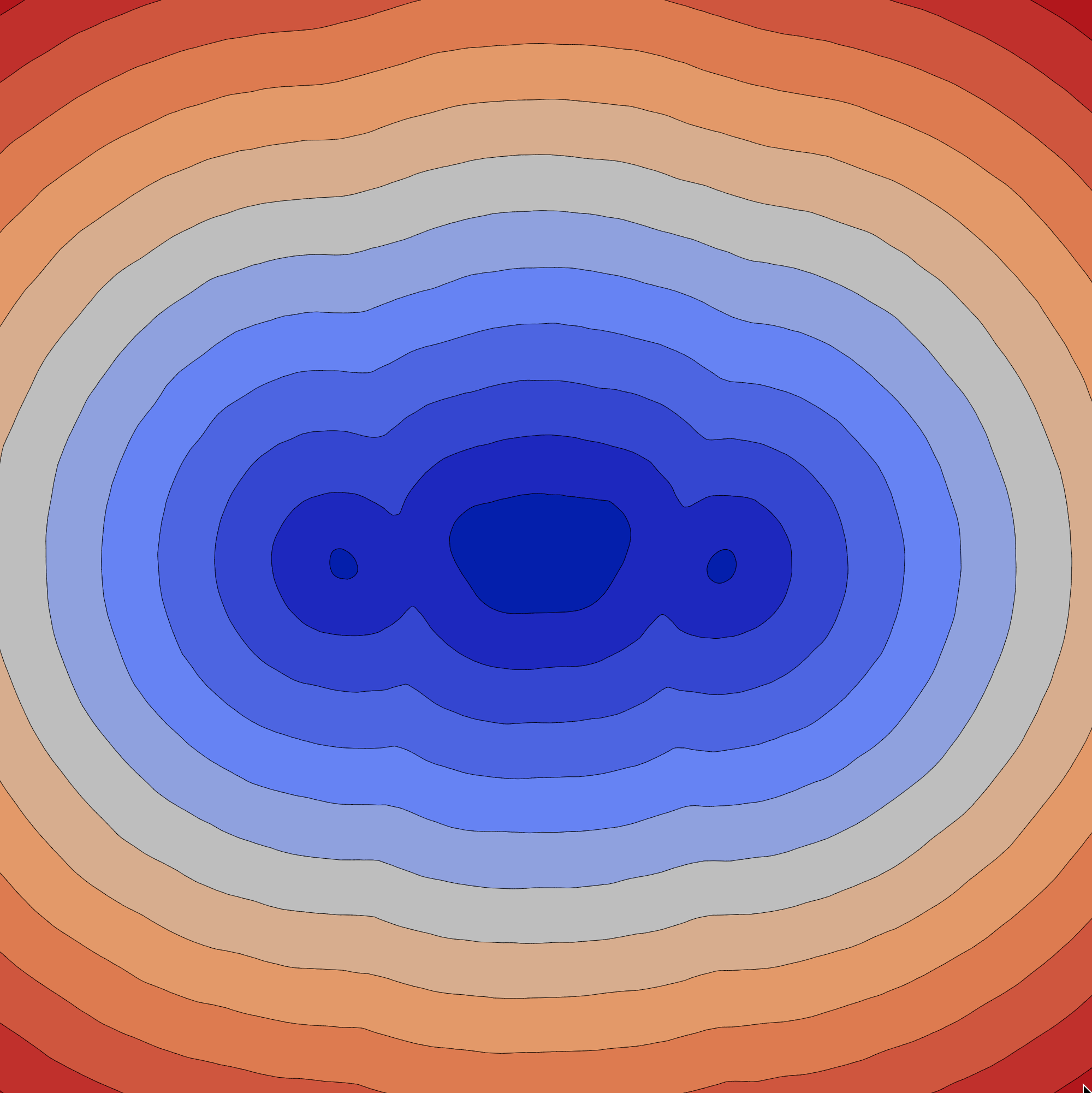}
		\caption{Level sets of a network $F$ from the experiment described in Section \ref{ss:curves_and_surfaces} shown along a cross-cut. Note how the iso-levels are equispaced, as encouraged by the loss in \Eqref{e:reconstruct}.}
		\label{fig:levelsets}
	\end{figure}
	
	\subsection{Level sets of reconstruction networks resemble signed distance function}
	Figure \ref{fig:levelsets} shows iso-levels of one of the networks from the experiment described in Section \ref{ss:curves_and_surfaces}. Note how the level sets resemble the level sets of a signed distance function.

	\section{Implementation details}
	\renewcommand{\thetable}{B\arabic{table}}
	\setcounter{table}{0}
	\renewcommand{\thefigure}{B\arabic{figure}}
	\setcounter{figure}{0}

	All experiments are conducted on a Tesla V100 Nvidia GPU using \textsc{pytorch} framework \citep{paszke2017automatic}.
	
	\begin{table}[h!]
		\centering
		\begin{tabular}{llllllrrr}
			\toprule
			ConvNet-2a & ConvNet-2b  & ConvNet-4a &  ConvNet-4b & FC-1 & FC-2 \\
			\midrule
			\textsc{Conv} 16 4x4+2  & \textsc{Conv} 32 5x5+1 &\textsc{Conv} 32 3x3+1    & \textsc{Conv} 32 3x3+1  & \textsc{FC} 512 & \textsc{FC} 509 \\
			\textsc{Conv} 32 4x4+2  & \textsc{MaxPool} 2x2   &\textsc{Conv} 32 3x3+1    & \textsc{Conv} 32 4x4+2  & \textsc{FC} 512 & \textsc{FC + skip} 509 \\
			\textsc{FC} 100         & \textsc{Conv} 64 5x5+1 & \textsc{MaxPool} 2x2    & \textsc{Conv} 64 3x3+1  & \textsc{FC} 512 & \textsc{FC + skip} 509 \\
			\textsc{FC} 10          & \textsc{MaxPool} 2x2   &\textsc{Conv} 64 3x3+1   & \textsc{Conv} 64 4x4+2  & \textsc{FC} 1   & \textsc{FC + skip} 509 \\
			& \textsc{FC} 512        &\textsc{Conv} 64 3x3+1    & \textsc{FC} 512         &                 & \textsc{FC + skip} 509 \\
			& \textsc{FC} 10         &\textsc{MaxPool} 2x2    & \textsc{FC} 512         &                 & \textsc{FC + skip} 509 \\
			&                        &\textsc{FC} 200    & \textsc{FC} 10          &                 & \textsc{FC + skip} 509 \\
			&                        &\textsc{FC} 200    &                         &                 & \textsc{FC} 1 \\
			&                        &\textsc{FC} 10    &                         &                 &                \\
			
			\bottomrule
		\end{tabular}
		\vspace{5pt}
		\caption{Our architectures. \textsc{Conv} $k w\times h+s$ corresponds to a convolution layer with $k$ channels, a kernel of size $w\times h$ and stride $s$. \textsc{FC} $n$ correspond to a fully connected layer with $n$ outputs. \textsc{FC + skip} indicates a skip connection to the input layer. Each \textsc{Conv}/\textsc{FC} layer is followed by a ReLU activation except for the last fully connected layer.}
		\label{tab:architectures}
	\end{table}
	
	\subsection{Parameters of experiments shown in Figure \ref{fig:teaser}}
	We train a $4$-layer MLP $F(x;\theta):\Real^2\times \Real^m \too \Real^2$, as in architecture FC-1, for $1000$ epochs using the \textsc{     Adam} optimizer \cite{kingma2014adam} with learning rate $0.001$. For the geometrical SVM loss we use $\lambda = 0.001$. Training set is composed of $16$ points in $\Real^2$, all of which lie inside $[0,0.5]^2$. Batch size is $1$. The sample network makes a maximum of $20$ iterations for the projection procedure.
	
	
	\subsection{Classification generalization}\label{app:imp_gen}
	In Table~\ref{tab:generalization_hyper} we summarize all hyper-parameters used in the generalization experiments (Section~\ref{sec:exp_clasgen}). For cross-entropy and hinge losses we checked learning rates of $0.001, 0.005, 0.01, 0.02$ and chose the ones that performed best. All models were trained using SGD (Nesterov) optimizer with momentum $0.9$ and weight decay $10^{-4}$.
	\begin{table}[H]
		\centering
		\setlength\tabcolsep{3pt}
		\begin{adjustbox}{max width=\textwidth}
			\aboverulesep=0ex
			\belowrulesep=0ex
			\renewcommand{\arraystretch}{1.5}
			\begin{tabular}{|l|l|l|l|}
				\hline
				\backslashbox{Params}{Dataset} & \multicolumn{1}{c|}{MNIST} & \multicolumn{1}{c|}{Fashion-MNIST} & \multicolumn{1}{c|}{CIFAR10} \\ \toprule
				Architecture & ConvNet-2b & ConvNet-2b & ConvNet-4b \\ \hline
				\begin{tabular}[c]{@{}l@{}}Geometric \\ SVM $\lambda$\end{tabular} & \begin{tabular}[c]{@{}l@{}}$\lambda$ grows linearly from\\  $0.01$ to $0.2$ over $50$ epochs\end{tabular} & \begin{tabular}[c]{@{}l@{}}$\lambda$ grows linearly from \\ $0.01$ to $0.2$ over $50$ epochs\end{tabular} & \begin{tabular}[c]{@{}l@{}}$\lambda$ grows linearly from\\  $0$ to $0.01$ over $50$ epochs\end{tabular} \\ \hline
				$\#$ Epochs & $200$ & $200$ & $100$ \\ \hline
				Batch size & $256$ ($32$ for fraction $\leq0.3$) & $32$ & $32$ \\ \hline
				\begin{tabular}[c]{@{}l@{}}$\#$ Iterations for \\ projection proc.\end{tabular} & $20$ & $20$ & $20$ \\ \hline
				\multicolumn{1}{|l|}{\begin{tabular}[c]{@{}l@{}}Initial \\ learning rate\end{tabular}} & \multicolumn{1}{l|}{$0.02$} & \multicolumn{1}{l|}{$0.02$} & \multicolumn{1}{l|}{$0.01$} \\ \hline
				\begin{tabular}[c]{@{}l@{}}Learning rate\\ decay\end{tabular} & \begin{tabular}[c]{@{}l@{}}multipled by $0.5$ at epochs\\  $50, 100, 120, 140, 160, 180$\end{tabular} & \begin{tabular}[c]{@{}l@{}}multipled by $0.5$ at epochs\\  $50, 100, 120$\end{tabular} & \begin{tabular}[c]{@{}l@{}}multipled by $0.5$ at epoch\\  $50$\end{tabular} \\ \hline
			\end{tabular}
		\end{adjustbox}     
		\vspace{3pt}
		\caption{Generalization experiments hyperparameters}
		\label{tab:generalization_hyper}
	\end{table}
	
	
	
	
	\subsection{Adversarial robustness} \label{app:adv_rob}
	
	We describe the parameters used in the experiments shown in Secion~\ref{sec:exp_adv}.
	
	\subsubsection*{Training parameters} 
	We use the networks described in Table~\ref{tab:architectures}, labeled ConvNet-4a and ConvNet-4b (following \cite{wong2017provable}) for the MNIST and CIFAR10 experiments respectively. Additionally, for CIFAR10 we add an experiment with ResNet-18 architecture as in \cite{zhang2019theoretically}.  All networks are trained with batches of size $128$. For the projection on the zero levelset procedure, we used the false-position method with a maximum of $40$ iterations per batch. The standard models are trained using cross-entropy loss for $200$ epochs on MNIST and CIFAR10 respectively (batch-size and learning rates are similar to the above mentioned models). All our models are trained using \textsc{Adam} optimizer \cite{kingma2014adam}.

	\paragraph{Bounded Attack (Table~\ref{tab:robustness_results})} We use the \textit{advertorch} library \cite{ding2018advertorch}. The attacks parameters are, for MNIST: $\varepsilon_\text{attack}=0.3$, PGD-iterations $40$ and $100$ and step size $0.01$. For CIFAR10: $\varepsilon_\text{attack}=8/255$, PGD-iterations $20$ and step size $0.003$. All models are evaluated at epoch $200$, except for Madry defense with ResNet-18 architecture evaluated at epoch $50$.
	
	\begin{table}[!htb]
		\centering
		\setlength\tabcolsep{2pt} 
		\begin{subtable}{.45\linewidth}
			\caption{Robust Accuracy Xent}
			\vspace{-5pt}
			\begin{tabular}{c}
				\begin{adjustbox}{max width=\textwidth}
					\aboverulesep=0ex
					\belowrulesep=0ex
					\renewcommand{\arraystretch}{1.1}
					\begin{tabular}[t]{l||c|c|c|c}
						\toprule
						\backslashbox{Target}{Source} & Standard & Madry & Trades & Ours \\
						\midrule
						Madry & 98.96\% & 96.04\% & 97.76\% & 99.21\% \\
						Trades & 98.57\% & 97.46\% & 96.78\% & 98.87\% \\
						Ours & 99.04\% & 97.78\% & 97.95\% & 99.23\% \\
						\bottomrule
					\end{tabular}
				\end{adjustbox}
			\end{tabular}
		\end{subtable} \qquad
		\begin{subtable}{.45\linewidth}
			\caption{Robust Accuracy Margin}
			\vspace{-5pt}
			\begin{tabular}{c}
				\begin{adjustbox}{max width=\textwidth}
					\aboverulesep=0ex
					\belowrulesep=0ex
					\renewcommand{\arraystretch}{1.1}
					\begin{tabular}[t]{l||c|c|c|c}
						\toprule
						\backslashbox{Target}{Source} & Standard & Madry & Trades & Ours \\
						\midrule
						Madry & 98.95\% & 96.11\% & 97.81\% & 98.78\% \\
						Trades & 98.56\% & 97.5\% & 96.74\% & 98.46\% \\
						Ours & 99.04\% & 97.87\% & 97.99\% & 97.35\% \\
						\bottomrule
					\end{tabular}
				\end{adjustbox}
			\end{tabular}
		\end{subtable} 
		\caption{ MNIST: Comparison of our method and baseline methods under black-box PGD$^{40}$ attack with $\varepsilon_\text{attack} = 0.3$. Rows (target) are the attacked models. All models are trained with ConvNet-4a architecture. Diagonal represents white-box attacks.}
	\end{table}
	
	\begin{table}[!htb]
		\centering
		\setlength\tabcolsep{2pt} 
		\begin{subtable}{.45\linewidth}
			\caption{Robust Accuracy Xent}
			\vspace{-5pt}
			\begin{tabular}{c}
				\begin{adjustbox}{max width=\textwidth}
					\aboverulesep=0ex
					\belowrulesep=0ex
					\renewcommand{\arraystretch}{1.1}
					\begin{tabular}[t]{l||c|c|c|c}
						\toprule
						\backslashbox{Target}{Source} & Standard & Madry & Trades & Ours \\
						\midrule
						Madry & 61.5\% & 41.53\% & 49.76\% & 50.97\% \\
						Trades & 67.84\% & 54.72\% & 41.89\% & 53.11\% \\
						Ours & 68.43\% & 56.71\% & 54.47\% & 38.45\% \\
						\bottomrule
					\end{tabular}
				\end{adjustbox}
			\end{tabular}
		\end{subtable} \qquad
		\begin{subtable}{.45\linewidth}
			\caption{Robust Accuracy Margin}
			\vspace{-5pt}
			\begin{tabular}{c}
				\begin{adjustbox}{max width=\textwidth}
					\aboverulesep=0ex
					\belowrulesep=0ex
					\renewcommand{\arraystretch}{1.1}
					\begin{tabular}[t]{l||c|c|c|c}
						\toprule
						\backslashbox{Target}{Source} & Standard & Madry & Trades & Ours \\
						\midrule
						Madry & 61.46\% & 39.13\% & 39.14\% & 51.08\% \\
						Trades & 67.58\% & 53.15\% & 38.25\% & 53.1\% \\
						Ours & 68.42\% & 55.85\% & 54.04\% & 38.54\% \\
						\bottomrule
					\end{tabular}
				\end{adjustbox}
			\end{tabular}
		\end{subtable} 
		\caption{ CIFAR10: Comparison of our method and baseline methods under black-box PGD$^{20}$ attack with $\varepsilon_\text{attack} = 0.031$. Rows (target) are the attacked models. All models are trained with ConvNet-4b architecture. Diagonal represents white-box attacks.}
	\end{table}

	\subsection{Surface Reconstruction}
	We describe the parameters used for the experiments in Section~\ref{ss:curves_and_surfaces}. For the Faust benchmark, the network architecture is set to FC-2 (similarly to \cite{chen2018learning,park2019deepsdf}) and is used both for our model and AtlasNet. The optimization is done using the \textsc{Adam} optimizer, batch size set to $10$ and the initial learning rate is set to $0.001$ (decreased by half at epochs $500$,$1500$,$3500$).
	Some additional implementation details are: first, we set the parameter $\lambda$ in our reconstruction loss to grow linearly from $1$ to $5$ over $1000$ epochs. Next, to generate samples of $\gS_t$ we add Gaussian noise ($\sigma = 0.1$) to the input batch, randomly sample half of the points and use it as initialization for the projection procedure to $\gS_0$. The other half is used to sample various level sets $\gS_t$ (see \Eqref{e:reconstruct}). The number of iterations for the projection procedure is set to 10.
	
	For the curve reconstruction experiment, the architecture used is FC-1 with the minor difference that the last layer output size is $2$. The ground truth is generated by randomly sampling $6$ points in space and generating a curve passing through the points, using cubic spline interpolation. We generate the input point cloud by sampling the ground truth curve and adding small Gaussian noise. The sample size is $300$ and sample points are chosen using Farthest Point Sampling. We generate samples from $\gS_0$ using the same procedure described above with the minor difference that the entire batch is used as initialization for the projection procedure (other, non-zero level sets are not sampled).

	\section{Proofs} \label{app:proofs}
	\setcounter{theorem}{0}
	\setcounter{lemma}{0}

	\begin{lemma}\label{lem:newton_sup}
		Let $\ell(x)=Ax+b$, $A\in \Real^{l\times d}$, $b\in \Real^l$, $\ell < d$, and $A$ is of full rank $l$. Then \Eqref{e:newton} applied to $F(x)=\ell(x)$ is an orthogonal projection on the zero level-set of $\ell$, namely, on $\set{x \ \vert \ \ell(x)=0}$.   
	\end{lemma}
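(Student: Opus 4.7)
The plan is to directly verify the two defining properties of the orthogonal projection onto the affine subspace $Z = \{x \mid \ell(x) = 0\}$: namely, that $p^{\mathrm{next}}$ lies in $Z$, and that $p - p^{\mathrm{next}}$ is orthogonal to $Z$. Uniqueness of the orthogonal projection then forces $p^{\mathrm{next}}$ to coincide with it.

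First, for $F = \ell$ we have $D_x F(p) = A$ independent of $p$, so \Eqref{e:newton} reduces to
\begin{equation*}
p^{\mathrm{next}} = p - A^{\dagger}(Ap + b).
\end{equation*}
Since $A \in \Real^{l\times d}$ has full row rank $l$, the Moore--Penrose pseudoinverse admits the closed form $A^{\dagger} = A^{T}(AA^{T})^{-1}$, and in particular $A A^{\dagger} = I_{l}$. Using this, I would compute
\begin{equation*}
\ell(p^{\mathrm{next}}) = A p^{\mathrm{next}} + b = Ap - AA^{\dagger}(Ap+b) + b = (Ap+b) - (Ap+b) = 0,
\end{equation*}
establishing that $p^{\mathrm{next}} \in Z$.

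Next I would verify orthogonality. The affine subspace $Z$ is parallel to $\ker(A)$, so its orthogonal complement (as a direction space) is $\ker(A)^{\perp} = \mathrm{range}(A^{T})$. The displacement $p - p^{\mathrm{next}} = A^{\dagger}(Ap+b) = A^{T}(AA^{T})^{-1}(Ap+b)$ manifestly lies in $\mathrm{range}(A^{T})$, hence is orthogonal to every direction in $Z$.

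Finally, any point $q \in Z$ satisfying $p - q \perp Z$ is by definition the orthogonal projection of $p$ onto $Z$, so $p^{\mathrm{next}}$ is that projection. The only mild subtlety is invoking full row rank to justify $AA^{\dagger} = I_{l}$ and the factorization $A^{\dagger} = A^{T}(AA^{T})^{-1}$; otherwise the argument is a direct linear-algebra calculation and no genuine obstacle should arise.
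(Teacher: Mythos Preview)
Your proof is correct and follows essentially the same approach as the paper: compute $p^{\mathrm{next}} = p - A^{\dagger}(Ap+b)$, verify $\ell(p^{\mathrm{next}})=0$ using $AA^{\dagger}=I_l$, and then observe that the displacement lies in $\mathrm{range}(A^{T}) = (\ker A)^{\perp}$ to conclude orthogonality. The only cosmetic difference is that you invoke the explicit formula $A^{\dagger} = A^{T}(AA^{T})^{-1}$ directly, whereas the paper first notes $p^{\mathrm{next}}-p \in \mathrm{Im}\,A^{\dagger}$ and then passes to $\mathrm{Im}\,A^{T}$; both arguments are equivalent.
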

	\begin{proof}
		Let $p\in\Real^d$ be the starting point. A single generalized Newton iteration (\Eqref{e:newton}) is
		\begin{equation}
		\label{e:newton_analysis}
		p^{\mathrm{next}}=p-A^\dagger (Ap+b).
		\end{equation} First, $p^{\mathrm{next}}$ is indeed on the level set because: $\ell(p^{\mathrm{next}})= A(p-A^\dagger (Ap+b))+b = 0$, where we used the fact that $AA^\dagger A = A$, and $AA^\dagger=I$ (since $\mathrm{rank}(A)=l$).
		Furthermore, from \Eqref{e:newton_analysis} we read that $p^{\mathrm{next}}-p \in \mathrm{Im} A^\dagger$ and therefore $p^{\mathrm{next}}-p \in \mathrm{Im} A^T$. This implies that $p^{\mathrm{next}}-p \perp \mathrm{Ker} A$. But $\mathrm{Ker} A$ is the tangent space of the level set $\set{x\,\vert \, \ell(x)=0}$, so $p^{\mathrm{next}}$ is the orthogonal projection of $p$ on the zero level set of $\ell$.
	\end{proof}
	
	\begin{lemma}\label{app:lemma_velocity}
		The columns of the solution in \Eqref{e:min_norm_sol}, namely $D_\theta p(\theta_0)$, are in the orthogonal space to the level set $\gS(\theta_0)$ at $p_0$. 
	\end{lemma}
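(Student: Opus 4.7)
The plan is to reduce the claim to two standard linear-algebraic facts about the Moore-Penrose pseudoinverse: that $\mathrm{Im}(A^\dagger)=\mathrm{Im}(A^T)$, and that $\mathrm{Im}(A^T)=\ker(A)^\perp$. Once these are in hand, the claim follows immediately because the tangent space of the level set at $p_0$ is precisely $\ker(D_x F(p;\theta_0))$, by the Implicit Function Theorem (as already invoked in Section~\ref{s:sample_network}).

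Concretely, I would write each column of the matrix $D_\theta p(\theta_0)\in\Real^{d\times m}$ as
\[
(D_\theta p(\theta_0))_{:,k} \;=\; -\,D_x F(p;\theta_0)^\dagger\,v_k,
\]
where $v_k$ is the $k$-th column of $D_\theta F(p;\theta_0)\in\Real^{l\times m}$. Denoting $A := D_x F(p;\theta_0)\in\Real^{l\times d}$ (full rank $l<d$), each such column lies in $\mathrm{Im}(A^\dagger)$. Using the standard identity $\mathrm{Im}(A^\dagger)=\mathrm{Im}(A^T)$ (for instance, via the SVD of $A$, where $A=U\Sigma V^T$ gives $A^\dagger=V\Sigma^\dagger U^T$ with the same right singular vectors spanning the image as $A^T=V\Sigma^T U^T$), we conclude the column lies in $\mathrm{Im}(A^T)$.

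Next, I would invoke the orthogonal decomposition $\Real^d=\mathrm{Im}(A^T)\oplus \ker(A)$ to conclude $\mathrm{Im}(A^T)=\ker(A)^\perp$. Finally, the tangent space $T_{p_0}\gS(\theta_0)$ equals $\ker(D_x F(p;\theta_0))=\ker(A)$ (this is exactly the linearization of the defining relation $F(x;\theta_0)=0$ that underlies the local manifold structure asserted in Section~\ref{s:sample_network}). Combining these, each column of $D_\theta p(\theta_0)$ lies in $\ker(A)^\perp = T_{p_0}\gS(\theta_0)^\perp$, which is the desired claim.

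No step is a serious obstacle; the only thing to be careful about is that the identification of the tangent space with $\ker(A)$ requires $A$ to have full rank $l$, which is precisely the hypothesis made throughout Section~\ref{s:sample_network} when invoking the Implicit Function Theorem. The proof is therefore essentially a one-line consequence of the defining range property of the pseudoinverse, and can be written in a few lines.
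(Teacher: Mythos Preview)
Your proposal is correct and follows essentially the same route as the paper's proof: set $A:=D_x F(p;\theta_0)$, observe that the columns of $D_\theta p(\theta_0)=-A^\dagger D_\theta F(p;\theta_0)$ lie in $\mathrm{Im}(A^T)$, and conclude they are orthogonal to $\ker(A)$, which is the tangent space of $\gS(\theta_0)$ at $p_0$. The only cosmetic difference is that the paper justifies $\mathrm{Im}(A^\dagger)\subseteq\mathrm{Im}(A^T)$ via the explicit full-rank formula $A^\dagger=A^T(AA^T)^{-1}$ rather than via the SVD, but this is the same fact proved two equivalent ways.
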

	\begin{proof}
		$D_\theta p\in \Real^{d\times m}$ describes the speed of $p$ w.r.t.~each of the parameters in $\theta$. If we assume $A:=D_x F(p;\theta_0)$ is of full rank $l$, which is the generic case, then  the Moore-Penrose inverse has the form $A^\dagger = A^T (A A^T)^{-1}$. This indicates that the columns of $D_\theta p(\theta_0) = -A^\dagger D_\theta F(p;\theta_0)\in \Real^{d\times m}$ belong to $\mathrm{Im} A^T$, which in turn implies that they are orthogonal to $\mathrm{Ker} A$, which is the tangent space of the level set at the point $p_0$
	\end{proof}
	
	\begin{theorem}\label{e:levelset_universality_sup}
		Any watertight, not necessarily bounded, piecewise linear hypersurface $\gM\subset \Real^d$ can be exactly represented as the neural level set $\gS$ of a multilayer perceptron with ReLU activations, $F:\Real^{d}\too\Real$. 
	\end{theorem}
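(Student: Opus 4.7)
The plan is to reduce the problem to the classical piecewise linear universality theorem for ReLU networks, namely Theorem~2.1 of \cite{arora2016understanding}, which states that every continuous piecewise linear (CPL) function $\Real^d\to\Real$ can be represented exactly by a ReLU MLP. So it suffices to construct a CPL function $F:\Real^d\too\Real$ whose zero level set equals $\gM$ and which changes sign across $\gM$; then feeding this $F$ into \cite{arora2016understanding} produces the desired network.

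Since $\gM$ is watertight, I would first write $\gM=\partial \Omega$ for some closed polyhedral region $\Omega\subset\Real^d$ (possibly unbounded, possibly with several connected components). Let $H_1,\dots,H_N$ be the finitely many distinct hyperplanes containing the $(d-1)$-faces of $\gM$. The hyperplane arrangement $\{H_i\}$ partitions $\Real^d$ into finitely many open convex cells (some unbounded); crucially, because $\gM$ lies in $\bigcup_i H_i$, each such open cell is connected and disjoint from $\gM$, hence lies entirely either in $\operatorname{int}(\Omega)$ or in $\operatorname{int}(\Omega^c)$. I would next refine this arrangement to a finite simplicial-style complex $\gK$ covering $\Real^d$ (allowing unbounded simplicial cones for the unbounded cells) so that $\gM$ is a subcomplex of $\gK$.

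With $\gK$ in hand, define $F$ on the vertices of $\gK$ by
\begin{equation*}
F(v)=\begin{cases} 0 & v\in\gM, \\ -1 & v\in \operatorname{int}(\Omega), \\ +1 & v\in \operatorname{int}(\Omega^c), \end{cases}
\end{equation*}
and extend $F$ affinely on each top-dimensional cell of $\gK$. The result is a globally continuous function that is linear on each cell, i.e.~CPL, with only finitely many linear pieces. Because $\gM$ is a subcomplex, every top-dimensional cell $\sigma$ of $\gK$ has its interior on exactly one side of $\gM$, so the values of $F$ at the vertices of $\sigma$ all lie in $\{0,-1\}$ or all in $\{0,+1\}$; consequently $F$ has constant sign on the interior of $\sigma$ and vanishes precisely on the union of faces of $\sigma$ contained in $\gM$. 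Taking the union over all top-dimensional cells yields $\{x:F(x)=0\}=\gM$. Invoking Theorem~2.1 of \cite{arora2016understanding} on this $F$ produces a ReLU MLP that realizes $F$ exactly, and its neural level set $\gS$ coincides with $\gM$.

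The main obstacle I expect is the construction in the unbounded case: one must verify that a locally simplicial subdivision compatible with $\gM$ can be built from finitely many pieces (so that the CPL function has finitely many affine regions and Arora et al.\ applies). This follows from the finiteness of the arrangement $\{H_i\}$ together with the standard fact that each unbounded cell of a hyperplane arrangement can be decomposed into finitely many simplicial cones. A second minor technicality is checking that no top-dimensional cell has its interior straddling $\gM$; this is handled by taking $\gK$ to refine the arrangement of $H_i$, since $\gM\subseteq\bigcup_i H_i$ then forces $\gM$ to appear only on codimension-one faces of $\gK$.
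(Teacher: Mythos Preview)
Your high-level strategy---construct a CPL function with zero set $\gM$ and then invoke \cite{arora2016understanding}---is sound, but the specific construction you give does not yield $\{F=0\}=\gM$. The problematic step is the claim that on a top-dimensional simplex $\sigma$ the affine interpolant ``vanishes precisely on the union of faces of $\sigma$ contained in $\gM$.'' In fact it vanishes on the face spanned by the vertices of $\sigma$ that lie on $\gM$, and this face need not be contained in $\gM$ unless $\gM$ happens to be a \emph{full} subcomplex of $\gK$ (every simplex whose vertices all lie in $\gM$ is itself in $\gM$). Refining the hyperplane arrangement does not guarantee fullness. Concretely, take $d=2$ and let $\Omega$ be the square annulus between $[-1,1]^2$ and $[-2,2]^2$, so $\gM=\partial\Omega$. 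The arrangement cell $(-1,1)\times(1,2)$ has all four corners on $\gM$; any triangulation using only those corners gives $F\equiv 0$ on the entire rectangle, whose interior lies in $\operatorname{int}(\Omega)$. Even after adding an interior vertex, the side $\{-1\}\times[1,2]$ still has both endpoints on $\gM$ while its relative interior lies in $\operatorname{int}(\Omega)$, so $F$ vanishes there too. Hence $\{F=0\}\supsetneq\gM$. This can be repaired (e.g.\ by inserting a vertex in the relative interior of every arrangement face not contained in $\gM$, or by a barycentric subdivision), but it is a genuine missing ingredient, not a formality. A secondary gap is the unbounded case: a simplicial cone has only its apex as a vertex, so ``assign values at vertices and extend affinely'' does not determine an affine function on the cone; you would also have to prescribe behaviour along the extreme rays and verify global continuity.

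For comparison, the paper avoids both difficulties by writing down an explicit global CPL formula rather than interpolating on a triangulation. With $h_i$ the affine functionals supporting the faces of $\gM$ (outward normals) and $P$ the region bounded by $\gM$, it sets
\[
f(x)=\max_{\lambda\in\Lambda}\ \min_{i\in[k]}\ \lambda_i h_i(x),
\]
where $\Lambda=\{\lambda\in\{-1,0,1\}^k:\ P_\lambda\subseteq P\}$ and $P_\lambda=\bigcap_i\{x:\lambda_i h_i(x)\geq 0\}$. This is manifestly CPL with finitely many pieces and globally defined (no triangulation, no special handling of unbounded cells), so \cite{arora2016understanding} applies directly; the remaining work is to check that $f$ is positive exactly on $\mathring P$, which the paper does by a case analysis over arrangement cells.
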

	\begin{proof}
		Let $h_i(x)=a_i^T x + b_i=0,\ i\in [k]$ denote the planes supporting the faces of $\gM$ where $a_i$ are chosen to be the outward normals to $\gM$. Since $\gM$ is watertight, it is the boundary of a $d$-dimensional polytope $P$.
		
		For each $\lambda \in \set{-1,0,1}^k$, let $P_\lambda = \cap_{i\in [k]} \set{x \ \vert \ \lambda_i h_i(x) \geq 0}$. Simply put, $P_\lambda$ is a polytope that is the intersection of closed half-spaces defined by the some of the hyperplanes $h_i$. Out of all the possible $P_\lambda$'s, we're only interested in those that are contained in $P$, so we define $\Lambda=\set{\lambda\ \vert \ P_\lambda \subseteq P}$.
		Now, we wish to show that every point in the interior of the large polytope necessarily also lies in the interior of some small polytope in our collection, i.e that $\cup_{\lambda \in \Lambda} \mathring{P_\lambda} = \mathring{P}$. So let $x \in \mathring{P}$. There are two cases:
		
		Case 1: $h_i(x) \ne 0\ \forall i \in [k].$ That is, $x$ does not lie exactly on a hyper-plane. We can take the following polytope $P_\lambda$ which contains $x$ in its interior: $\lambda_i=\mathrm{sign}(h_i(x))$. We note that $\lambda \in \set{-1,1}^k$, and we call such a polytope \textit{minimal}. We argue that the interior of a minimal polytope is either completely inside $P$ or completely outside it. This is true because otherwise the minimal polytope will contain two points that are on two different sides of some hyper-plane, which is inconsistent with $\lambda \in \set{-1,1}^k$. In our case, we know that $P_\lambda$ and $P$ both contain $x$ in their interior, so necessarily $P_\lambda \subseteq P$, which means that $\lambda \in \Lambda$.
		
		Case 2: $\exists \set{i_1, ..., i_l} \subseteq [k]$ s.t. $h_i(x)=0\ \forall i \in \set{i_1, ..., i_l}$, and $h_i \ne 0\ \forall i \in [k]\setminus\set{i_1,...,i_l}$. In this case there is no minimal polytope that contains $x$ in its interior, so let us consider all of the minimal polytopes which contain $x$ on their boundary. Let $P_\mu$ be such a minimal polytope. As previously stated, the interior of $P_\mu$ is either completely inside $P$ or completely outside it, but since $x$ is both on the boundary of $P_\mu$ and in the interior of $P$ then necessarily $P_\mu$ is completely inside $P$, i.e, $P_\mu \subset P$. We are interested in the union of all such minimal polytopes. Note that for such a minimal polytope $P_\mu$, necessarily $\mu_i = sign(h_i(x))\ \forall i \in [k]\setminus\set{i_1,...,i_l}$. For $i \in \set{i_1,...,i_l}$, $\mu_i$ may receive any value in $\set{1,-1}$. Thus, the union of all such minimal polytopes is $P_\lambda$ where: $$\lambda_i=\begin{cases}0 &,\ i \in \set{i_1,...,i_l} \\ \mathrm{sign}(h_i(x)) &,\ otherwise \end{cases}$$
		which clearly contains $x$ in its interior and is itself contained in $P$ (because it is the union of minimal polytopes which are contained in $P$), i.e $\lambda \in \Lambda$.
		

		%
		%
		
		We are now ready to define a function which will receive positive values on the interior of $P$, negative values outside of $P$, and will have $\gM$ as its levelset:
		$$f(x)=\max_{\lambda\in\Lambda}\min_{i\in [k]} \lambda_i h_i(x)$$
		
		$f$ is a piecewise linear function and can, therefore, according to Theorem 2.1 in \cite{arora2016understanding}, be encoded as an MLP with ReLU activations. The idea is to build $\max$ operators using linear layers and ReLU via $\max \set{a,b} = \frac{\sigma(a-b)}{2} + \frac{\sigma(b-a)}{2} + \frac{a+b}{2}$, where $\sigma(x)=\max(0,s)$ is the ReLU activation. Using this binary $\max$, one can recursively build the $\max$ of a vector. $\min$ is treated similarly.
		
	\end{proof}

\end{document}